\tikzstyle{edge} = 
\newcommand{\myrightarrow}[1]{\mathrel{\raisebox{6pt}{$\xrightarrow{#1}$}}}
\newtheorem{tm}{Theorem}[section]
\newtheorem{df}[tm]{Definition}
\newtheorem{lm}[tm]{Lemma} 
\newtheorem{co}[tm]{Corollary}
\theoremstyle{definition}
\newtheorem{ex}[tm]{Example}
\newtheorem{re}[tm]{Remark}
\newcommand{\X}{\mathbf X}
\newcommand{\Y}{\mathbf Y}
\newcommand{\E}{\mathbf E}
\renewcommand{\c}{\mathbf c}
\newcommand{\x}{\mathbf x}
\newcommand{\y}{\mathbf y}
\renewcommand{\P}{\mathbf {Pr}}
\renewcommand{\S}{\mathbf S}
\newcommand{\R}{\mathbb R}
\newcommand{\A}{\mathcal A}
\title{McDiarmid-Type Inequalities for Graph-Dependent Variables and Stability Bounds}
\author{
\textbf{Rui (Ray) Zhang}
\thanks{
This work was done when this author was a master student at 
the Institute of Computing Technology, Chinese Academy of Sciences
and University of Chinese Academy of Sciences.
This research forms part of Rui (Ray) Zhang's master thesis 
submitted to the University of Chinese Academy of Sciences in May 2019.
} \\
School of Mathematics \\
Monash University \\
\texttt{rui.zhang@monash.edu} \\
\And
\textbf{Xingwu Liu}
\thanks{
Corresponding author
} \\
Institute of Computing Technology, \\
Chinese Academy of Sciences. \\
University of Chinese Academy of Sciences \\
\texttt{liuxingwu@ict.ac.cn} \\
\And
\textbf{Yuyi Wang} \\
ETH Zurich, Switzerland \\
X-Order Lab, China\\
\texttt{yuyiwang920@gmail.com}\\
\And
\textbf{Liwei Wang} \\
Key Laboratory of Machine Perception, MOE, \\
School of EECS, Peking University \\
Center for Data Science, Peking University \\
\texttt{wanglw@cis.pku.edu.cn} \\
}
\begin{document}
\maketitle

\begin{abstract}


A crucial assumption in most statistical learning theory is that samples are independently and identically distributed (i.i.d.). However, for many real applications, the i.i.d. assumption does not hold. We consider learning problems in which examples are dependent and their dependency relation is characterized by a graph. To establish algorithm-dependent generalization theory for learning with non-i.i.d. data, we first prove novel McDiarmid-type concentration inequalities for Lipschitz functions of graph-dependent random variables. We show that concentration relies on the forest complexity of the graph, which characterizes the strength of the dependency. We demonstrate that for many types of dependent data, the forest complexity is small and thus implies good concentration. Based on our new inequalities we are able to build stability bounds for learning from graph-dependent data.

\end{abstract}

\section{Introduction}
Generalization theory is at the foundation of machine learning. It quantifies how accurate a model would predict on the test data which the learning algorithm is not able to access during training. It usually relies on a crucial assumption: The data are independently and identically distributed (i.i.d.). The i.i.d.\ assumption allows one to use many powerful tools from probability to prove strong generalization error bounds. However, in real applications, the data are often non-i.i.d.\ i.e., the data collected can be dependent. There have been extensive discussions on why and how the data are dependent. We refer the readers to \cite{dehling2002empirical,amini2015learning}.

Establishing generalization theory for dependent data has received a lot of attention \cite{mohri2008stability,mohri2009rademacher,ralaivola2010chromatic,kuznetsov2017generalization,yi2018learning}. A major line of research in this direction models the data dependency by various types of mixing such as $\alpha$-mixing~\cite{rosenblatt1956central},
$\beta$-mixing~\cite{volkonskii1959some}, $\phi$-mixing~\cite{ibragimov1962some}, $\eta$-mixing~\cite{kontorovich2007measure}, etc.
Mixing models have been used in statistical learning theory to 
establish generalization error bounds
based on Rademacher complexity~\cite{mohri2009rademacher,kuznetsov2017generalization,mohri2010stability} or algorithmic stability~\cite{mohri2008stability,mohri2010stability,he2016stability}
via concentration results~\cite{kontorovich2008concentration} or independent blocking technique~\cite{yu1994rates}.
In these models, the mixing coefficients measure the extent to which the data are dependent to each other.
Similar to the mixing models, 
learning under Dobrushin's condition \cite{pmlr-v99-dagan19a}
is also investigated via concentration results \cite{kulske2003concentration,chatterjee2005concentration,kontorovich2017concentration} using
Dobrushin's interaction matrix \cite{dobruschin1968description}.
Although the results under the various mixing conditions and Dobrushin's condition are fruitful, they are faced with difficulties in application: It is sometimes difficult to determine the quantitative dependency among data points. On the other hand, determining whether two data are dependent or not is often much easier. In this paper, we focus on such qualitative dependency of data. We use simple graphs as a natural tool to describe the dependency among data, and establish generalization theory for such graph-dependent data. 

A basic building block of generalization theory is concentration inequality. Different settings and different assumptions require different concentration tools. The less we assume, the more powerful tools we need. In order to establish generalization theory for dependent data, standard concentration for i.i.d.\ data no longer applies. One must develop concentration inequalities for dependent data, which is a very challenging task.

In his seminal work~\cite{janson2004large}, Janson proved an elegant concentration inequality for graph-dependent data. The inequality is a beautiful extension of Hoeffding inequality. It bounds the probability that the summation of graph-dependent random variables deviates from its expected value, in terms of the fractional coloring number of the dependency graph. 
Janson's inquality has been extended to any functions that can be decomposed into the summation of some functions of independent random variables~\cite{usunier2006generalization}. 
This extension enables to establish generalization error bounds for graph-dependent data via fractional Rademacher complexity. 

In \cite{ralaivola2010chromatic}, PAC-Bayes bounds for classification with non-i.i.d.\ data are obtained based on fractional colorings of graphs. The results also hold for specific learning settings such as ranking and learning from stationary $\beta$-mixing distributions. In \cite{ralaivola2015entropy}, Ralaivola and Amini established new concentration inequalities for fractionally sub-additive and fractionally self-bounding functions of dependent variables. Their results are based on the fractional chromatic numbes and the entropy method. In \cite{DBLP:conf/alt/WangGR17}, Wang et al.\ used hypergraphs to model dependent random variables that are generated by independent ones. Leveraging the notion of fractional matching, they also establish concentration inequalities of Hoeffding- or Bernstein-type. 

Though fundamental and elegant, the above generalization bounds are algorithm-independent. They considered the complexity of the hypothesis space and data distribution, but does not involve the learning algorithm. 
To derive better generalization bounds, there are growing interests in developing algorithm-dependent generalization theories. This line of research heavily relies on the algorithmic stability. A key advantage of stability bounds is that they are tailored to specific learning algorithms, exploiting their particular properties. 




How can we establish algorithmic stability theory for graph-dependent data? Note that under the assumption of i.i.d.\ data, Hoeffding-type concentration inequality, which bounds the deviation of sample average from expectation, is not strong enough to prove stability-based generalization. On the contrary, McDiarmid’s inequality characterizes the concentration of general Lipschitz functions of i.i.d.\ random variables, hence serving as the key tool for proving the stability theory. Therefore, to build algorithmic stability theory for non-i.i.d. samples, one has to develop McDiarmid-type concentration for graph-dependent random variables.

In this paper, we prove the first McDiarmid-type concentration inequality for graph-dependent random variables in terms of a new notion called forest complexity, which measures the strength of the dependency. It turns out that for various dependency graphs, it is easy to estimate the forest complexity. The proposed concentration inequality enables us to prove stability-based generalization bounds for graph-dependent data. Our results provide basic tools for understanding learning with overparameterized models.



The rest of the paper is organized as follows. 
In section 2, we briefly introduce the notations and related results.
In section 3, we establish McDiarmid-type inequalities for acyclic dependency graphs,
and extend the concentration results to the general dependency graphs.
In section 4, we apply our concentration results to the learning theory 
and establish generalization error bounds for learning graph-dependent data
via algorithmic stability, we also provide an application of learning $m$-dependent data.
Section 5 concludes the paper and points out the future research directions.

\section{Preliminaries}
In this section, we present the notations and the basic McDiarmid's inequality for i.i.d. random variables.

Throughout this paper, let $n$ be a positive integer with $[n]$ standing for the set $\{1,2,\ldots,n\}$. Let $\Omega_i$ be a Polish space for any $i\in [n]$, $\mathbf{\Omega}=\prod_{i\in [n]}\Omega_i$ be the product space, $\R$ be the set of real numbers, $\R_+$ be the set of non-negative real numbers, $\mathbb N_+$ be the set of non-negative integers.

Concentration inequalities are fundamental tools in statistical learning theory.
They are essentially tail probability bounds indicating how much a function of random variables deviates from some value that is usually the expectation. 
Among the most powerful ones is the McDiarmid's inequality which establishes a sharp, even tight in some cases, bound on the concentration, when the function satisfies $\c$-Lipschitz condition (bounded differences condition), namely, does not depend too much on any individual variable. 

\begin{df}[$\c$-Lipschitz]
Given a vector $\c=(c_1,\ldots,c_n) \in \R_+^n$, a function $f:\mathbf{\Omega}\rightarrow \mathbb{R}$ is said to be $\c$-Lipschitz if for any $\mathbf{x}=(x_1,\ldots,x_n),\mathbf{x}'=(x'_1,\ldots,x'_n)\in \mathbf{\Omega}$, it satisfies
\[
| f(\x ) - f(\x ') | \le \sum_{i = 1}^n c_i \mathbf 1_{ \{ x_i \ne x_i' \} },
\]
where $c_i$ is called the $i$-th Lipschitz coefficient of $f$.
\end{df}
\begin{tm}[McDiarmid's inequality~\cite{mcdiarmid1989method}]
Suppose $f: \mathbf{\Omega}\rightarrow \R$ is $\c$-Lipschitz, and $\X=(X_1,\ldots,X_n)$ is a vector of independent random variables with each $X_i$ taking values in $\Omega_i$. Then for any $t>0$, the tail probability satisfies
\begin{equation}\label{eqn:McDiarmid's ie}
\P\left( f(\X) - \E   [f(\X)] \ge t \right)
\le \exp \left(  - \dfrac{2t^2}{ \|\c\|^2_2}  \right). 
\end{equation}
\label{McDiarmid's inequality}
\end{tm}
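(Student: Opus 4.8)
The plan is to run the classical martingale (Azuma--Hoeffding) argument. First I would introduce the Doob martingale associated with $f(\X)$: for $i=0,1,\ldots,n$ set $Z_i = \E[f(\X)\mid X_1,\ldots,X_i]$, so that $Z_0 = \E[f(\X)]$, $Z_n = f(\X)$, and the martingale differences $D_i := Z_i - Z_{i-1}$ satisfy $\E[D_i\mid X_1,\ldots,X_{i-1}] = 0$ and $\sum_{i=1}^n D_i = f(\X)-\E[f(\X)]$. Since each $\Omega_i$ is Polish, the conditional expectations and the regular conditional distributions used below are well defined.

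The crucial step is to show that, conditionally on $X_1,\ldots,X_{i-1}$, the difference $D_i$ lies in an interval of length at most $c_i$. Write $g_i(x_1,\ldots,x_i) := \E[f(\X)\mid X_1=x_1,\ldots,X_i=x_i]$; by independence of $X_1,\ldots,X_n$ this equals $\E[f(x_1,\ldots,x_i,X_{i+1},\ldots,X_n)]$, and, again by independence, $Z_{i-1} = \E_{X_i}\big[g_i(X_1,\ldots,X_{i-1},X_i)\big]$. Hence
\[
\Big(\inf_{u\in\Omega_i} g_i(X_1,\ldots,X_{i-1},u)\Big) - Z_{i-1} \ \le\ D_i \ \le\ \Big(\sup_{u\in\Omega_i} g_i(X_1,\ldots,X_{i-1},u)\Big) - Z_{i-1},
\]
an interval determined by $X_1,\ldots,X_{i-1}$ whose length is $\sup_{u,u'}\big(g_i(X_1,\ldots,X_{i-1},u) - g_i(X_1,\ldots,X_{i-1},u')\big)$. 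Since $f$ is $\c$-Lipschitz, $f(\ldots,u,\ldots) - f(\ldots,u',\ldots) \le c_i$ holds pointwise in the remaining coordinates, and averaging over $X_{i+1},\ldots,X_n$ preserves the bound, so the length is at most $c_i$. This is the step I expect to be the main obstacle: it is exactly where independence enters — it both licenses the "integrate out the tail" formula for $g_i$ and the fact that the conditional law of $X_i$ given $X_1,\ldots,X_{i-1}$ is its marginal — and it is precisely the place the later graph-dependent extensions must replace with a subtler argument.

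The rest is routine. Hoeffding's lemma says a mean-zero random variable supported in an interval of length $\ell$ has $\E[e^{\lambda Y}] \le e^{\lambda^2\ell^2/8}$ for every $\lambda\in\R$; applying it to $D_i$ given $X_1,\ldots,X_{i-1}$ gives $\E[e^{\lambda D_i}\mid X_1,\ldots,X_{i-1}] \le e^{\lambda^2 c_i^2/8}$. Peeling the conditioning off one coordinate at a time with the tower rule,
\[
\E\big[e^{\lambda(f(\X)-\E[f(\X)])}\big] = \E\Big[e^{\lambda\sum_{i=1}^{n} D_i}\Big] \le \prod_{i=1}^n e^{\lambda^2 c_i^2/8} = e^{\lambda^2\|\c\|_2^2/8}.
\]
A Chernoff bound then yields $\P\big(f(\X)-\E[f(\X)]\ge t\big) \le e^{-\lambda t}\,\E\big[e^{\lambda(f(\X)-\E[f(\X)])}\big] \le \exp\!\big(-\lambda t + \tfrac18\lambda^2\|\c\|_2^2\big)$, and the choice $\lambda = 4t/\|\c\|_2^2$ gives the advertised bound $\exp\!\big(-2t^2/\|\c\|_2^2\big)$.
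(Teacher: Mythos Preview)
Your proof is correct and is precisely the standard martingale argument. The paper does not give its own proof of this theorem---it is quoted from \cite{mcdiarmid1989method} as a preliminary result---but the machinery it invokes later (Lemma~\ref{McDiarmid's Lemma} in the appendix, also from \cite{mcdiarmid1989method}) is exactly the Doob-martingale/Chernoff--Cram\'er step you carry out, with your verification that $b_j=c_j$ under independence being the routine specialization.
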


Notice that the McDiarmid's inequality works for independent random variables.
Janson's Hoeffding-type inequality~\cite{janson2004large} for graph-dependent random variables
is a special case of McDiarmid-type inequality when the function is a summation. 
Specifically, when $f(\X) = \sum_{i=1}^n X_i$ with each $X_i$ ranging over an interval of length $c_i$,
\begin{equation}\label{eqn:janson}
\P\left( \sum_{i=1}^n X_i - \E  \left[\sum_{i=1}^n X_i\right] \ge t \right)\le \exp \left(  - \dfrac{2t^2}{ \chi^*(G) \|\c\|^2_2}  \right),
\end{equation}
where $\c=(c_1,\ldots,c_n)$ and $\chi^*(G)$ is the fractional coloring number 
of a dependency graph $G$ of random variables $\X$.

\section{McDiarmid Concentration for Graph-dependent Random Variables}

In this section we present our first set of main results, the McDiarmid-type concentration inequalities (i.e., concentration of Lipschitz functions) for graph-dependent random variables. The results in this section will serve as the tools for developing learning theory for dependent data.

We start from the simplest case that the dependency graph is acyclic, i.e., trees or forests. We prove McDiarmid-type concentration bounds for trees and forests with very simple forms. These inequalities are then extended to general graphs. To this end, we introduce the notion of forest complexity, which characterizes to what extent a general graph can be best approximated by a forest. We prove McDiarmid-type concentration inequality for general graph-dependent random variables in terms of the forest complexity. Finally we demonstrate that for many important classes of graphs, forest complexity is easy to estimate.

Below we first define the notion of dependency graphs, which is a widely used model in probability, statistics, and combinatorics, see \cite{erdos1975problems,janson1988exponential,chen1978two,baldi1989normal,janson2011random} for examples.
\begin{df}[Dependency Graphs]
An undirected graph $G$ is called a dependency graph of a random vector $\X=(X_1,\ldots,X_n)$ if 
\begin{enumerate}
\item $V(G)=[n]$
\item if $I, J \subset [n]$ are non-adjacent in $G$, then 
$\{X_i\}_{i \in I}$ and $\{X_j\}_{j \in J}$ are independent.
\end{enumerate}
\end{df}


\subsection{McDiarmid Concentration for Acyclic Graph-dependent Variables}
Our first result is for the case that the dependency graph is a tree. 
\begin{tm} 
Suppose that $f: \mathbf{\Omega}\rightarrow \R$ is a $\c$-Lipschitz function and $G$ is a dependency graph of a random vector $\X$ that takes values in $\mathbf{\Omega}$. If $G$ is a tree, then for any $t > 0$, 
the following inequality holds:
\begin{equation}\label{eq:Dependency forest}
\P( f(\X) - \E [f(\X)] \ge t ) 
\le \exp \left( - \dfrac{2t^2}{ 
\sum_{\langle i,j \rangle \in E(G)} ( c_i + c_j)^2 + c_{\min}^2 } \right),
\end{equation}
where $c_{\min}$ is the minimum entry in $\c$.
\label{Concentration Inequality for Dependency Tree}
\end{tm}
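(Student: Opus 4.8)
The plan is to exploit the tree structure to build a martingale whose increments can be bounded in terms of the Lipschitz coefficients attached to a single edge, and then apply the Azuma--Hoeffding inequality. First I would fix a root of the tree $G$ and order the vertices $1,\dots,n$ by a breadth-first (or depth-first) traversal, so that every non-root vertex $i$ has a unique parent $p(i)$ appearing earlier in the order. The natural Doob martingale is $Z_k = \E[f(\X)\mid X_1,\dots,X_k]$ with $Z_0 = \E[f(\X)]$ and $Z_n = f(\X)$. The key point is that, because $G$ is a dependency graph and a tree, conditioning on the ancestors of $i$ already ``screens off'' much of the graph: the subtree hanging below $i$ is independent of everything outside that subtree given $X_{p(i)}$. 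So I would try to show that the increment $|Z_k - Z_{k-1}|$ is controlled not by $c_k$ alone (as in the independent case) but by $c_k + c_{p(k)}$, the second term arising because revealing $X_k$ can also shift the conditional law of the variables in $k$'s subtree, whose influence is mediated through the edge $\langle p(k), k\rangle$.

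Concretely, I expect the bounded-difference estimate to take the following shape. Write $T_k$ for the set of descendants of $k$ (including $k$) and $R_k$ for the rest. Using the dependency-graph property, conditionally on $X_1,\dots,X_{k-1}$ the block $\{X_i\}_{i\in T_k}$ is independent of $\{X_i\}_{i \in R_k \cap \{k+1,\dots,n\}}$ given $X_{p(k)}$, which is already revealed. Then $Z_k - Z_{k-1}$ can be written as a difference of conditional expectations of $f$ under two couplings of $X_k$ (and the induced law on $T_k$), versus a ``frozen'' value; each coordinate in $T_k \setminus \{k\}$ that differs contributes at most its own $c_i$, but by a telescoping/averaging argument along the tree these contributions collapse, and what survives is a bound of the form $|Z_k - Z_{k-1}| \le c_k + c_{p(k)}$ for non-root $k$, and $|Z_1 - Z_0| \le c_1$ (or more carefully $c_{\min}$ can be arranged for the root term) for the root. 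Summing the squares of these increments gives
\[
\sum_{k=1}^n (\text{increment bound})^2 \;\le\; \sum_{\langle i,j\rangle \in E(G)} (c_i + c_j)^2 + c_{\min}^2,
\]
since each edge $\langle p(k),k\rangle$ is charged exactly once (at its lower endpoint $k$), and the leftover root contribution is absorbed into $c_{\min}^2$. Plugging this into Azuma--Hoeffding, $\P(Z_n - Z_0 \ge t) \le \exp(-2t^2/\sum_k (\text{increment bound})^2)$, yields \eqref{eq:Dependency forest}.

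The main obstacle is the increment bound $|Z_k - Z_{k-1}| \le c_k + c_{p(k)}$ itself: one must argue carefully that revealing $X_k$ perturbs the conditional expectation of $f$ by at most $c_k$ from the direct change in coordinate $k$ plus at most $c_{p(k)}$ from the change in the conditional distribution of the entire pendant subtree $T_k$. The subtlety is that $T_k$ may contain many variables, so naively one would get $\sum_{i \in T_k} c_i$; the tree/dependency structure must be used to show these can be coupled so that the net effect is felt only through the parent edge. I would handle this by an explicit coupling: construct a joint distribution of the ``pre-$X_k$'' and ``post-$X_k$'' configurations on $T_k$ in which they agree except possibly at $k$ and at coordinates whose discrepancy is already accounted for by the single parent link, using the Markov-type factorization of the law along the tree. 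A clean way to organize this is induction on the subtree: prove simultaneously a statement about how much the conditional expectation over a pendant subtree, viewed as a function of the parent's value, can vary — essentially that it is $(c_k + c_{p(k)})$-Lipschitz in $X_k$ — which is exactly the form that feeds the martingale argument and where the $(c_i+c_j)^2$ edge terms originate.
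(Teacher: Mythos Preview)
Your high-level strategy---Doob martingale plus Azuma--Hoeffding, with a coupling to bound each increment---is exactly the paper's. The gap is in the vertex ordering, and it is fatal as stated. You reveal the root first and then invoke a ``Markov-type factorization of the law along the tree''; no such factorization follows from the dependency-graph definition, which guarantees only \emph{unconditional} independence of non-adjacent vertex sets. Concretely, take the path $A$--$B$--$C$ with $X_A,X_C$ i.i.d.\ uniform on $\{\pm 1\}$ and $X_B=X_AX_C$; this is a valid dependency graph because $X_A\perp X_C$. Rooting at $B$ and revealing in your order $(B,A,C)$, the increment at step $k=2$ (revealing $X_A$ after $X_B$) has size $|f(1,1,1)-f(-1,1,-1)|$ on the event $X_B=1$, which is bounded by $c_A+c_C$, not by your claimed $c_A+c_{p(A)}=c_A+c_B$. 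The root increment is likewise not bounded by $c_B$ alone. The failure is that conditioning on the already-revealed parent $X_{p(k)}$ can \emph{create} dependence between $X_k$ and the other unrevealed variables in $R_k\cap\{k+1,\dots,n\}$; your concern about the subtree $T_k$ is misplaced---it is the rest of the graph that leaks.

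The paper's fix is to reverse the order: choose the root to be a vertex with coefficient $c_{\min}$ and label so that every vertex \emph{precedes} its parent (leaves first, root $=n$ last). Then at step $i$ the entire subtree $T_i$ rooted at $i$ has already been revealed, and $T_i$ is non-adjacent in $G$ to $S:=[i+1,n]\setminus\{p(i)\}$ and to $[i-1]\setminus V(T_i)$. The dependency-graph axiom now gives $\X_{V(T_i)}\perp (\X_S,\X_{[i-1]\setminus V(T_i)})$ \emph{unconditionally}, whence the conditional law of $\X_S$ given $\X_{[i]}$ does not depend on $x_i$. The coupling therefore matches on $S$ and needs to resample only coordinate $p(i)$, yielding $|Z_i-Z_{i-1}|\le c_i+c_{p(i)}$ for $i<n$ and $|Z_n-Z_{n-1}|\le c_n=c_{\min}$. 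No telescoping or collapsing over the subtree is needed; the correct ordering simply guarantees that the only unrevealed neighbor of the newly exposed block is the single parent vertex.
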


The proof of this theorem
relies on decomposing $f(\X) - \E [f(\X)]$ into the summation $\sum_{i = 1}^n V_i$ with $V_i := \E [ f(\X)| X_1, \ldots X_i ] - \E [ f(\X)| X_1, \ldots X_{i-1} ]$. 
We show that each $V_i$ ranges in an interval of length at most $c_i+c_j$, where $j$ is the parent of $i$ in the tree (in the proof, we make the tree rooted by choosing the vertex with the minimum Lipschitz coefficient as the root).
The theorem is then proved by applying the Chernoff-Cram\'{e}r technique to $\sum_{i = 1}^n V_i$. 
For details, please refer to Subsection \ref{tree} in the supplementary materials.

Like McDiarmid's inequality, Theorem \ref{Concentration Inequality for Dependency Tree} also claims a deviation probability bound that decays exponentially. The decay rate is determined by two interplaying factors. One is the Lipschitz coefficient that is inherent to the function. The other is the pattern of the dependency, namely, which random variables are dependent and connected by an edge. 


We then generalize the above result to the case where dependency graph $G$ is a forest.

\begin{tm}
Suppose that $f: \mathbf{\Omega}\rightarrow \R$ is a $\c$-Lipschitz function and $G$ is a dependency graph of a random vector $\X$ that takes values in $\mathbf{\Omega}$. If $G$ is a forest consisting of trees $\{ T_i \}_{i \in [k]}$, then for any $t > 0$, 
the following inequality holds:
\begin{equation}\label{eqn:Dependency forest}
\P( f(\X) - \E[f(\X)] \ge t )
\le \exp \left( - \dfrac{2t^2}{ 
\sum_{\langle i,j \rangle \in E(G)} ( c_i + c_j)^2 + \sum^k_{i=1} c_{\min,i}^2 } \right),
\end{equation}
where $c_{\min,i}=\min\{c_j: j\in V(T_i)\}$.
\label{Concentration Inequality for Dependency forest}
\end{tm}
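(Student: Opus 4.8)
The plan is to reduce the forest case to the tree case already established in Theorem~\ref{Concentration Inequality for Dependency Tree}, by applying the same martingale-difference decomposition but organizing the bookkeeping tree-by-tree. First I would fix an ordering of the vertices $1,\ldots,n$ that is \emph{consistent} with the forest structure: process the trees $T_1,\ldots,T_k$ one at a time, and within each tree $T_i$ list its vertices in an order obtained by rooting $T_i$ at its minimum-coefficient vertex (the one achieving $c_{\min,i}$) and taking, say, a BFS/DFS order so that every non-root vertex appears after its parent. With this ordering, write $f(\X) - \E[f(\X)] = \sum_{\ell=1}^{n} V_\ell$ where $V_\ell := \E[f(\X)\mid X_1,\ldots,X_\ell] - \E[f(\X)\mid X_1,\ldots,X_{\ell-1}]$, exactly as in the tree proof.

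The key step is to bound the width of the interval in which each $V_\ell$ ranges. I expect the same argument as in Theorem~\ref{Concentration Inequality for Dependency Tree} to apply locally within each tree: if vertex $\ell$ is the root of its tree $T_i$, then $V_\ell$ ranges in an interval of length at most $c_\ell = c_{\min,i}$ (here one uses that the roots of distinct trees, together with all previously revealed variables, are handled by the independence across trees — conditioning on variables from other trees does not affect the conditional law of the variables in $T_i$, since non-adjacent vertex sets are independent). If $\ell$ is a non-root vertex of $T_i$ with parent $j$, then $V_\ell$ ranges in an interval of length at most $c_\ell + c_j = c_i + c_j$, the coefficient sum over the edge $\langle i,j\rangle$. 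Summing the squared widths, each edge of the forest contributes $(c_i+c_j)^2$ and each of the $k$ roots contributes $c_{\min,i}^2$, giving total $\sum_{\langle i,j\rangle \in E(G)}(c_i+c_j)^2 + \sum_{i=1}^k c_{\min,i}^2$. Applying the Chernoff–Cramér / Azuma–Hoeffding bound to $\sum_\ell V_\ell$ with these bounded differences then yields \eqref{eqn:Dependency forest}.

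The main obstacle is making the interval-width bound for $V_\ell$ rigorous when the conditioning set $\{X_1,\ldots,X_{\ell-1}\}$ straddles several trees: one must argue that the variables already revealed from \emph{other} trees are irrelevant to the conditional expectation step at vertex $\ell$, so that effectively the same "parent/root" analysis from the single-tree proof goes through verbatim. This follows from the definition of dependency graph (distinct trees in a forest are non-adjacent, hence mutually independent), but it should be stated carefully. Once that reduction is in place, the rest is a routine repetition of the computation in Subsection~\ref{tree} of the supplementary material, and I would simply refer to it rather than reproduce it. A clean alternative, worth a remark, is to note that the forest $G$ is a subgraph of the tree $G'$ obtained by adding $k-1$ edges joining the roots of consecutive trees into a path; then Theorem~\ref{Concentration Inequality for Dependency Tree} applied to $G'$ gives a bound with denominator $\sum_{\langle i,j\rangle \in E(G')}(c_i+c_j)^2 + c_{\min}^2$, but this is weaker than \eqref{eqn:Dependency forest} because those $k-1$ linking edges contribute extra $(c_{\min,i}+c_{\min,j})^2$ terms, so the direct martingale argument above is the one that gives the stated sharper constant.
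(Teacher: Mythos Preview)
Your overall strategy---the same martingale decomposition as in the tree case, handled tree-by-tree using independence across components---is precisely the paper's approach. However, your within-tree ordering is reversed, and this breaks the claimed width bounds. You take a top-down order (root first, every non-root \emph{after} its parent), whereas the paper's ``Well-sorted'' convention is bottom-up: every descendant comes \emph{before} its ancestor, so the root is processed \emph{last}. In your ordering, when the root $r$ of $T_i$ is revealed, all of its children are still future variables whose conditional law depends on $x_r$; the coupling from Subsection~\ref{tree} cannot keep them fixed, so the width of $V_r$ is $c_r$ plus the coefficients of all its children, not $c_r=c_{\min,i}$. Likewise, for a non-root $\ell$ whose parent $j$ is already in the conditioning set, changing $x_\ell$ does not touch $j$ at all; it is the not-yet-revealed \emph{children} of $\ell$ whose conditional law shifts, so the width is $c_\ell+\sum_{k:\,p(k)=\ell}c_k$, not $c_\ell+c_j$. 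A star centered at $r$ makes this concrete: top-down gives $V_r$ of width $c_r+\sum_{k}c_k$, which destroys the bound.

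The fix is to reverse the within-tree order to bottom-up (leaves first, root of each $T_i$ last). Then at a non-root $\ell$ all descendants are already conditioned on and the only future neighbor is $p(\ell)$, so Lemma~\ref{martingale differences lemma} gives width $c_\ell+c_{p(\ell)}$; at the root of $T_i$ there is no future neighbor at all (vertices in later trees are independent of $T_i$), giving width $c_{\min,i}$. With this correction your argument coincides with the paper's proof in Subsection~\ref{forest}, which simply notes that Lemmas~\ref{lm:indepofxi}--\ref{martingale differences lemma} remain valid once each component is well-rooted and well-sorted.
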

Theorem~\ref{Concentration Inequality for Dependency forest} can be proved in a similar way as Theorem~\ref{Concentration Inequality for Dependency Tree}. The detailed proof is presented in Subsection \ref{forest} of the supplementary materials. 

We point out that Theorem~\ref{Concentration Inequality for Dependency forest} is a strict generalization of the McDiarmid's inequality for i.i.d. random variables. If all the random variables are independent, i.e., there is no edge in the dependency graph, then it is clear that Eq.\ \eqref{eqn:Dependency forest} degenerates exactly to Eq.\ (\ref{eqn:McDiarmid's ie}). 

Theorem~\ref{Concentration Inequality for Dependency forest} also clearly demonstrates how dependency between random variables affects concentration. The decay rate of the probability that $f(\X)$ deviates from its expectation is approximately reversely proportional to the number of edges in the dependency graph.


\subsection{McDiarmid Concentration for General Graphs}

In this subsection, we consider general graphs. 
Our basic idea for handling general graphs is to use a forest to approximate the graph. Specifically, we partition the variables into groups so that the dependency graph of these groups is a forest. We try to find the optimal forest approximation, which leads to the notion of forest complexity. We then prove McDiarmid-type concentration inequality for general graph-dependent random variables in terms of its forest complexity, which yields a very simple form.

We first define the concept of forest approximation.

\begin{df}[Forest Approximation]
Given a graph $G$, a forest $F$, and a mapping $\phi: V(G)\rightarrow V(F)$, if $\phi(u)=\phi(v)$ or $\langle \phi(u),\phi(v) \rangle\in E(F)$ for any $\langle u,v \rangle\in E(G)$, we say that $(\phi,F)$ is a forest approximation of $G$. Let $\Phi(G)$ denote the set of forest approximations of $G$.
\end{df}

Intuitively, a forest approximation is transforming a graph into a forest by merging vertices and removing the incurred self-loops and multi-edges. In this way, we rule out the redundant variables that heavily depend on others and thus contribute little to concentration.

Based on forest approximation, we define the notion of forest complexity of a graph, which intuitively measures how much the graph looks like a forest. 

\begin{df}[Forest Complexity]\label{lambda}
Given a graph $G$ and any forest approximation $(\phi,F)\in \Phi(G)$ with $F$ consisting of trees $\{ T_i \}_{i \in [k]}$, let 
\[
\lambda_{(\phi,F)} = \sum_{\langle u,v \rangle \in E(F)} 
\left( |\phi^{-1}(u)| + |\phi^{-1}(v)| \right)^2 + \sum_{i=1}^k\min_{u\in V(T_i)}|\phi^{-1}(u)|^2.
\]
We call
\[
\Lambda(G)=\min_{(\phi,F)\in\Phi(G)} \lambda_{(\phi,F)}
\]
the forest complexity of the graph $G$.
\end{df}


Now we are ready to state our McDiarmid-type concentration inequality for general graph-dependent random variables.

\begin{tm}
Suppose that $f: \mathbf{\Omega}\rightarrow \R$ is a $\c$-Lipschitz function and $G$ is a dependency graph of a random vector $\X$ that takes values in $\mathbf{\Omega}$.  For any $t>0$,  the following inequality holds:
\[
\P( f(\X) - \E   [f(\X)] \ge t )
\le \exp \left( - \dfrac{2t^2}{ \Lambda(G)\| \c \|_{\infty}^2
} \right).
\]
\label{Concentration Inequality for Dependency Forest 1}
\end{tm}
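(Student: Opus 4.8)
The plan is to reduce the general-graph case to the forest case (Theorem~\ref{Concentration Inequality for Dependency forest}) by collapsing vertices along an optimal forest approximation. Fix an arbitrary forest approximation $(\phi,F)\in\Phi(G)$ with $F$ consisting of trees $\{T_i\}_{i\in[k]}$, and let the vertex set of $F$ be indexed so that $\phi^{-1}(u)$ is the block of original indices mapped to $u$. For each $u\in V(F)$ define the grouped variable $Y_u=(X_i)_{i\in\phi^{-1}(u)}$, taking values in $\Omega_u':=\prod_{i\in\phi^{-1}(u)}\Omega_i$. First I would check that $F$ is a dependency graph of the random vector $\mathbf Y=(Y_u)_{u\in V(F)}$: if $U,W\subseteq V(F)$ are non-adjacent in $F$, then by the defining property of a forest approximation no edge of $G$ joins $\bigcup_{u\in U}\phi^{-1}(u)$ to $\bigcup_{w\in W}\phi^{-1}(w)$, so these index sets are non-adjacent in $G$, hence the corresponding families of $X_i$'s — and therefore $\{Y_u\}_{u\in U}$ and $\{Y_w\}_{w\in W}$ — are independent.

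Next I would define $g:\prod_{u}\Omega_u'\to\R$ by $g(\mathbf y)=f(\x)$, where $\x$ is the obvious re-assembly of the blocks $y_u$ into a point of $\mathbf\Omega$; then $g(\mathbf Y)=f(\X)$ and $\E[g(\mathbf Y)]=\E[f(\X)]$. The key step is to bound the Lipschitz coefficients of $g$: if $\mathbf y,\mathbf y'$ differ only in coordinate $u$, then the reassembled $\x,\x'$ differ in at most the $|\phi^{-1}(u)|$ coordinates indexed by that block, so since $f$ is $\c$-Lipschitz, $|g(\mathbf y)-g(\mathbf y')|\le\sum_{i\in\phi^{-1}(u)}c_i\le |\phi^{-1}(u)|\,\|\c\|_\infty$. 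Hence $g$ is $\c'$-Lipschitz with $c_u'=|\phi^{-1}(u)|\,\|\c\|_\infty$. Applying Theorem~\ref{Concentration Inequality for Dependency forest} to $g$, $\mathbf Y$, and $F$ gives
\[
\P(f(\X)-\E[f(\X)]\ge t)\le\exp\!\left(-\frac{2t^2}{\sum_{\langle u,v\rangle\in E(F)}(c_u'+c_v')^2+\sum_{i=1}^k c_{\min,i}'^2}\right),
\]
and factoring out $\|\c\|_\infty^2$ the denominator becomes exactly $\lambda_{(\phi,F)}\,\|\c\|_\infty^2$, since $c_u'+c_v'=(|\phi^{-1}(u)|+|\phi^{-1}(v)|)\|\c\|_\infty$ and $c_{\min,i}'=\min_{u\in V(T_i)}|\phi^{-1}(u)|\cdot\|\c\|_\infty$. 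Finally, since the forest approximation $(\phi,F)$ was arbitrary, I would optimize over all of $\Phi(G)$, replacing $\lambda_{(\phi,F)}$ by $\Lambda(G)=\min_{(\phi,F)}\lambda_{(\phi,F)}$, which only makes the exponent more negative and hence the bound valid.

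The main obstacle I anticipate is not any single inequality but getting the bookkeeping of the reassembly map and the dependency-graph verification fully rigorous: one must be careful that merging vertices of $G$ really does yield $F$ as a bona fide dependency graph of the grouped vector (the forest-approximation condition is exactly what makes this work, but it needs to be invoked cleanly), and that the loss from using the uniform bound $c_i\le\|\c\|_\infty$ inside each block is what forces the $\|\c\|_\infty^2$ rather than a finer $\ell_2$-type quantity in the statement. Everything else is a direct substitution into Theorem~\ref{Concentration Inequality for Dependency forest}.
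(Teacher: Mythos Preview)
Your proposal is correct and matches the paper's argument essentially step for step: group the variables along a forest approximation, view $f$ as a function $g$ of the grouped variables with dependency forest $F$, bound the block Lipschitz coefficients, apply Theorem~\ref{Concentration Inequality for Dependency forest}, and then optimize over $\Phi(G)$. The only cosmetic difference is that the paper first records the intermediate bound with the sharper block coefficient $\widetilde c_u=\sum_{i\in\phi^{-1}(u)}c_i$ (Lemma in Subsection~\ref{generalgraph}) before passing to $|\phi^{-1}(u)|\,\|\c\|_\infty$, whereas you go straight to the $\|\c\|_\infty$ bound; and you spell out the verification that $F$ is a dependency graph of $\mathbf Y$, which the paper leaves implicit.
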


With the tool of forest approximation, we reduce the concentration problem defined on graphs to that defined on forests. Basically, we use a new variable to represent each set of the original variables that are merged together by the forest approximation. The function can be equivalently transformed into a function of the new variables whose dependency graph is the forest. The proof is done by applying Theorem \ref{Concentration Inequality for Dependency forest} to the new function. For details, please refer to Subsection \ref{generalgraph} in the supplementary materials.

Like the above theorems, Theorem \ref{Concentration Inequality for Dependency Forest 1} also establishes an exponentially decaying probability of deviation. The decay rate is totally determined by the Lipschitz coefficient of the function and the forest complexity of the variables' dependency graph. Intuitively, the more the dependency graph looks like a forest, the faster the deviation probability decays. This uncovers how the dependencies among random variables influence concentration.


\subsection{Illustrations and Examples}
\label{Examples and Applications}

This subsection consists of two parts. In the first part we review a widely-studied random process that generates dependent data whose dependency graph can be naturally constructed.
In the second part, we deal with some dependency graphs to show that in many cases, the forest complexity is small and easy to estimate.



Consider a data generating procedure modeled by the \textit{spatial Poisson point process}, 
which is a Poisson point process on $\R^2$  (See~\cite{linderman2014discovering,kirichenko2015optimality} for discussions of using this process to model data collection in various machine learning applications.)
The number of points in each finite region follows a Poisson distribution, 
and the number of points in disjoint regions are independent.
Given a finite set $\mathcal I = \{ I_i \}_{i=1}^n$ of regions in $\R^2$, let $X_i$ be the number of points in region $I_i$, $1\le i\le n$. 
Then the graph $G\left( [n], \{ \langle i, j \rangle: I_i \cap I_j \ne \emptyset \} \right)$ is a dependency graph of the random variables $\{ X_i \}_{i=1}^n$.




We present three examples to demonstrate that estimating the forest complexity $\Lambda(G)$ is usually easy. All the examples can naturally appear in the above process.

\begin{ex}[$G$ is a tree]
In this case, the identity map between $G$ and itself is a forest approximation of $G$. Then 
$\Lambda(G) \le |E(G)|(1+1)^2 + 1 = 4n - 3 = O(n)$. We get an upper bound of $\Lambda(G)$ that is linear in the number of variables, which is almost tight compared with Hoeffding's inequality or Janson's result (see (\ref{eqn:janson}) with $\chi^*(G)=2$).
\label{example tree}
\end{ex}

\begin{ex}[$G$ is a cycle $C_n$]
If $n$ is even, a forest approximation is illustrated in Figure~\ref{evencycle}, where the cycle is approximated by a path $F$ of length $\frac{n}{2}$. The approximation $\phi$ maps any vertex of $G$ to the vertex of $F$ having the same shape, so each gray belt stands for a preimage set of $\phi$. We will keep this convention in the rest of this section. By the illustrated forest approximation, 
$\Lambda(G) \le 2 \times (1+2)^2 + (\frac{n}{2}-2)(2+2)^2+1 = 8n-13  = O(n) $.
When $n$ is odd, according to the forest approximation shown in Figure~\ref{oddcycle},
$\Lambda(G)\le (1+2)^2 + (\frac{n - 1}{2}-1)(2+2)^2 +1= 8n - 14 = O(n)$. Since $\chi^*(G)$ is 2 or 3, our bound is again very tight compared with Janson’s result.
\label{example cycle}
\end{ex}

\begin{figure}[H]
\begin{minipage}{.5\textwidth}
\centering
\begin{subfigure}[c]{.4\textwidth}\centering
\begin{tikzpicture}
\node[diamond,draw=black,fill=white,scale=.8] (i) at (0:1cm) {$$};
\node[circle,draw=black,fill=white] (j) at (60:1cm) {$$};
\node[diamond,draw=black,fill=white,scale=.8] (k) at (120:1cm) {$$};
\node[rectangle,draw=black,fill=white] (l) at (180:1cm) {$$};
\node[regular polygon,regular polygon sides=3,draw,scale=0.6] (m) at (240:1cm) {$$};
\node[rectangle,draw=black,fill=white] (n) at (300:1cm) {$$};
\draw[-]
(i) edge (j) (j) edge (k) (k) edge (l) (l) edge (m) (m) edge (n) (n) edge (i);
\begin{pgfonlayer}{background}
\draw[edge] (0:1cm) edge (120:1cm);
\draw[edge] (180:1cm) edge (300:1cm);
\end{pgfonlayer}
\end{tikzpicture}
\caption*{G}
\end{subfigure}
\Large$\myrightarrow{\hspace*{.25cm}\phi\hspace*{.25cm}}$
\begin{subfigure}[c]{.2\textwidth}\centering
\begin{tikzpicture}[scale=.6]
\node[circle,draw=black,fill=white,scale=.6] (A) at (0, 3) {$$};
\node[diamond,draw=black,fill=white,scale=.5] (B) at (0, 2) {$$};
\node[rectangle,draw=black,fill=white,scale=0.7] (C) at (0, 1) {$$};
\node[regular polygon,regular polygon sides=3,draw,scale=0.4] (E) at (0, 0) {$$};
\draw[-]  
(A) edge (B) (B) edge (C) (C) edge (E);
\end{tikzpicture}
\caption*{F}
\end{subfigure}
\caption{A forest approximation of $C_6$}
\label{evencycle}
\end{minipage}%
\begin{minipage}{.5\textwidth}
\centering
\begin{subfigure}[c]{.4\textwidth}\centering
\begin{tikzpicture}
\node[rectangle,draw=black,fill=white] (i) at (0:1cm) {$$};
\node[circle,draw=black,fill=white] (j) at (72:1cm) {$$};
\node[circle,draw=black,fill=white] (k) at (144:1cm) {$$};
\node[rectangle,draw=black,fill=white] (l) at (216:1cm) {$$};
\node[regular polygon,regular polygon sides=3,draw,scale=0.6] (m) at (288:1cm) {$$};
\draw
(i) edge (j) (j) edge (k) (k) edge (l) (l) edge (m) (m) edge (i);
\begin{pgfonlayer}{background}
\draw[edge] (0:1cm) edge (216:1cm);
\draw[edge] (72:1cm) edge (144:1cm);
\end{pgfonlayer}
\end{tikzpicture}
\caption*{G}
\end{subfigure}
\Large$\myrightarrow{\hspace*{.25cm}\phi\hspace*{.25cm}}$
\begin{subfigure}[c]{.2\textwidth}\centering
\begin{tikzpicture}[scale=.6]
\node[circle,draw=black,fill=white,scale=.6] (A) at (0, 3) {$$};
\node[rectangle,draw=black,fill=white,scale=0.7] (C) at (0, 1.5) {$$};
\node[regular polygon,regular polygon sides=3,draw,scale=0.4] (E) at (0, 0) {$$};
\draw[-]  
(A) edge (C) (C) edge (E);
\end{tikzpicture}
\caption*{F}
\end{subfigure}
\caption{A forest approximation of $C_5$}
\label{oddcycle}
\end{minipage}%
\end{figure}

\begin{ex}[$G$ is a grid] Suppose $G$ is a two-dimensional $(m\times m)$-grid. Then $n=m^2$. Considering the forest approximation illustrated in Figure~\ref{grid}, $\Lambda(G) \le 2 [ 3^2 +5^2 + \ldots + (2m - 1)^2 ] + 1
= \frac{2m(2m+1)(2m-1)-3}{3} 
= O(m^3) 
= O(n^\frac{3}{2}) 
$
\label{example grid}
\end{ex}

\begin{figure}[H]
\centering
\begin{subfigure}{.45\textwidth}
\begin{tikzpicture}\centering
\draw (0, 0) grid [step=1] (3, 3);

\node [star,draw,scale=0.8,fill=white]  (00) at (0, 0) {};
\node [diamond,draw=black,fill=white,scale=.8]  (33) at (3, 3) {};

\node [regular polygon,regular polygon sides=5,draw,scale=0.8,fill=white]  (01) at (0, 1) {};
\node [regular polygon,regular polygon sides=5,draw,scale=0.8,fill=white]  (10) at (1, 0) {};

\node [rectangle,draw=black,fill=white]  (02) at (0, 2) {};
\node [rectangle,draw=black,fill=white]  (11) at (1, 1) {};
\node [rectangle,draw=black,fill=white]  (20) at (2, 0) {};

\node [regular polygon,regular polygon sides=3,draw,scale=0.6,fill=white]  (13) at (1, 3) {};
\node [regular polygon,regular polygon sides=3,draw,scale=0.6,fill=white]  (22) at (2, 2) {};
\node [regular polygon,regular polygon sides=3,draw,scale=0.6,fill=white]  (31) at (3, 1) {};

\node [regular polygon,regular polygon sides=6,draw,scale=0.8,fill=white]  (23) at (2, 3) {};
\node [regular polygon,regular polygon sides=6,draw,scale=0.8,fill=white]  (32) at (3, 2) {};

\node [circle,draw=black,fill=white]  (03) at (0, 3) {};
\node [circle,draw=black,fill=white]  (12) at (1, 2) {};
\node [circle,draw=black,fill=white]  (21) at (2, 1) {};
\node [circle,draw=black,fill=white]  (30) at (3, 0) {};

\node []  () at (-2.5, 1.5) {$G$};
\begin{pgfonlayer}{background}
\draw[edge] (2,3) edge (3,2);

\draw[edge] (1,3) edge (3,1);

\draw[edge] (0,3) edge (3,0);

\draw[edge] (0,2) edge (2,0);

\draw[edge] (0,1) edge (1,0);
\end{pgfonlayer}
\end{tikzpicture}
\end{subfigure}

\vspace{-.1cm}
\begin{subfigure}[c]{.45\textwidth}
\begin{tikzpicture}
\node[] (i) at (-4.2,0) {$$};
\node[] (i) at (0,0) {$$};
\node[] (k) at (.28,-.4) {$\phi$};
\node[] (j) at (0,-.8) {$$};
\draw[->, thick]
(i) -- (j);
\end{tikzpicture}
\end{subfigure}
\vspace{-.1cm}

\begin{subfigure}[c]{.45\textwidth}
\begin{tikzpicture}
\centering
\node[] (0) at (-1, 0) {$F$};
\node[star,draw,scale=0.8,fill=white] (A) at (0, 0) {$$};
\node[regular polygon,regular polygon sides=5,draw,scale=0.8,fill=white] (B) at (1, 0) {$$};
\node[rectangle,draw=black,fill=white] (C) at (2, 0) {$$};
\node[circle,draw=black,fill=white] (D) at (3, 0) {$$};
\node[regular polygon,regular polygon sides=3,draw,scale=0.6,fill=white] (E) at (4, 0) {$$};
\node[regular polygon,regular polygon sides=6,draw,scale=0.8,fill=white] (F) at (5, 0) {$$};
\node[diamond,draw=black,fill=white,scale=.8] (G) at (6, 0) {$$};
\draw[-]  
(A) edge (B) (B) edge (C) (C) edge (D) (D) edge (E) (E) edge (F) (F) edge (G);
\end{tikzpicture}
\end{subfigure}
\caption{A forest approximation of the $(4\times 4)$-gird}
\label{grid}
\end{figure}
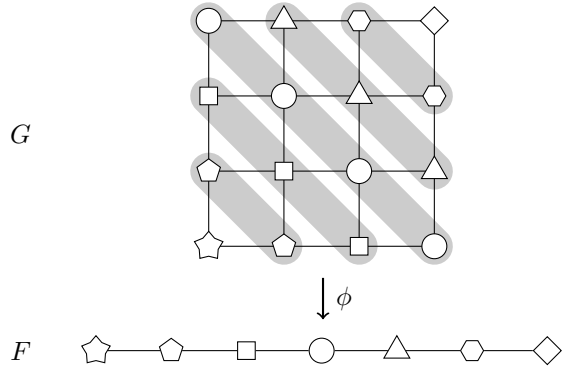

\section{Generalization Theory for Learning from Graph-Dependent Data}

This section establishes stability generalization error bounds for learning from graph-dependent data, using the concentration inequalities derived in the last section.


Consider the supervised learning setting:
Let 
$\S = ((x_1, y_1), \ldots, (x_n, y_n)) \in (\mathcal X \times \mathcal Y)^n$ be a training sample of size $n$,
where $\mathcal X$ is the input space and $\mathcal Y$ is the output space.
Let $D$ be the underlying distribution of data on $\mathcal X \times \mathcal Y$.
Assume that all the training data points $(x_i, y_i)$'s have the same marginal distribution $D$ 
and that $G$ is a dependency graph of $\S$.

Throughout this section, fix a non-negative loss function
$\ell: \mathcal Y \times \mathcal Y \rightarrow \R$. 
For any hypothesis $f: \mathcal X \rightarrow \mathcal Y$, 
the empirical error on sample $\S$ is 
\[
\widehat R(f) = \frac{1}{n} \sum_{i = 1}^n \ell(y_i, f(x_i)). 
\]
For learning from dependent data, the generalization error can be defined in various ways. 
We adopt the following widely-used one ~\cite{meir2000nonparametric,lozano2006convergence,steinwart2009fast,hang2014fast}
\begin{equation}
R(f) = \E_{(x, y)\sim D} [ \ell(y, f(x)) ],
\label{gen error}
\end{equation}
which assumes that the test set is independent of the training set.

\subsection{Bounding Generalization Error via Algorithmic Stability}
Algorithmic stability 
has been used in the study of classification and regression to derive generalization bounds ~\cite{rogers1978finite,devroye1979distribution,kearns1999algorithmic,kutin2002almost,mou2018dropout,mou2017generalization}.
A key advantage of stability bounds is that they are designed for specific learning algorithms, exploiting particular properties of the algorithms. Introduced 17 years ago, uniform stability \cite{bousquet2002stability} is now among the most widely used notions of algorithmic stability.

Given a training sample $\S$ of size $n$ and $i\in [n]$, 
remove the $i$-th element from $\S$, resulting in a sample of size $n-1$, which is denoted by
$\S^{\setminus i} = ( (x_1, y_1), \ldots, (x_{i - 1}, y_{i - 1}), (x_{i + 1}, y_{i + 1}) \ldots, (x_n, y_n))$.
For a learning algorithm $\mathcal A$, 
define $f^{\mathcal A}_{\S} : \mathcal X \rightarrow \mathcal Y$ to be the
the hypothesis that $\mathcal A$ has learned from the sample $\S$.

\begin{df}[Uniform Stability~\cite{bousquet2002stability}]
Given integer $n>0$, the learning algorithm $\mathcal A$ is called $\beta_n$-uniformly stable with respect to the loss
function $\ell$, if for any $i\in [n]$, $\S \in (\mathcal X \times \mathcal Y)^n$, and $(x, y) \in \mathcal X \times \mathcal Y$,
it holds that
\[
| \ell(y, f^{\mathcal A}_{\S}(x)) - \ell(y, f^{\mathcal A}_{\S^{\setminus i}}(x))  | \le \beta_n.
\]
\end{df}
Intuitively, the stability of a leaning algorithm means that any small perturbation of training samples has little effect on the result of learning. 

Now, we begin our analysis with studying the distribution of $\Phi_{\mathcal A}(\S) = R(f^{\mathcal A}_{\S}) - \widehat R (f^{\mathcal A}_{\S})$, namely, the difference between the empirical and the generalization errors. The mapping $\Phi_{\mathcal A}: (\mathcal X \times \mathcal Y)^n \rightarrow \R$  will play a critical role in estimating $R(f^{\mathcal A}_{\S})$ via stability. 
We first show that the deviation of $\Phi_{\mathcal A}(\S)$ from its expectation can be bounded with high probability (Lemma \ref{stability concentration}), and then upper bound the expected value of $\Phi_{\mathcal A}(\S)$ in Lemma \ref{stability expectation lemma via degree}. 



\begin{lm}
Given a sample $\S$ of size $n$ with dependency graph $G$, 
assume that the learning algorithm $\A$ is $\beta_n$-uniformly stable. Suppose the
loss function $\ell$ is bounded by $M$.
Then for any $t>0$, it holds that
\[
\P( \Phi_{\mathcal A}(\S) - \E [\Phi_{\mathcal A}(\S)] \ge t )
\le \exp \left( - \dfrac{2n^2t^2}{\Lambda(G) (4n\beta_n + M)^2} \right).
\]
\label{stability concentration}
\end{lm}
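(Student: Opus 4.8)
The plan is to deduce this directly from Theorem~\ref{Concentration Inequality for Dependency Forest 1}. View $\Phi_{\mathcal A}$ as a function on $\mathbf\Omega=\prod_{i\in[n]}(\mathcal X\times\mathcal Y)$ whose $i$-th argument is the data point $(x_i,y_i)$, so that the given $G$ is a dependency graph of $\Phi_{\mathcal A}$'s arguments. Since $\|\c\|_\infty$ enters the asserted bound only through $\|\c\|_\infty^2=(4n\beta_n+M)^2/n^2$, it suffices to prove that $\Phi_{\mathcal A}$ is $\c$-Lipschitz with every coordinate coefficient at most $(4n\beta_n+M)/n$; the inequality then follows by substituting into Theorem~\ref{Concentration Inequality for Dependency Forest 1}.

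So fix $i\in[n]$ and two samples $\S,\S'$ differing only in the $i$-th coordinate, and let $\S^{\setminus i}$ be their common $(n-1)$-element subsample. I would bound $|\Phi_{\mathcal A}(\S)-\Phi_{\mathcal A}(\S')|$ by splitting $\Phi_{\mathcal A}=R-\widehat R$. For the generalization part, $|R(f^{\mathcal A}_\S)-R(f^{\mathcal A}_{\S'})|\le\E_{(x,y)\sim D}|\ell(y,f^{\mathcal A}_\S(x))-\ell(y,f^{\mathcal A}_{\S'}(x))|$; inserting $\ell(y,f^{\mathcal A}_{\S^{\setminus i}}(x))$ and applying $\beta_n$-uniform stability twice (to $\S$ and to $\S'$, both of size $n$, deleting their $i$-th element) gives the bound $2\beta_n$. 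For the empirical part, write $\widehat R(f^{\mathcal A}_\S)-\widehat R(f^{\mathcal A}_{\S'})=\frac1n\sum_{j=1}^n\bigl[\ell(y_j,f^{\mathcal A}_\S(x_j))-\ell(y'_j,f^{\mathcal A}_{\S'}(x'_j))\bigr]$; for each of the $n-1$ indices $j\ne i$ the data points coincide, so the same bridging-through-$\S^{\setminus i}$ argument bounds the summand by $2\beta_n$, while the single $j=i$ term is bounded by $M$ because $\ell$ is non-negative and bounded by $M$. Hence $|\widehat R(f^{\mathcal A}_\S)-\widehat R(f^{\mathcal A}_{\S'})|\le\frac1n\bigl((n-1)2\beta_n+M\bigr)\le 2\beta_n+M/n$, and combining the two parts yields $|\Phi_{\mathcal A}(\S)-\Phi_{\mathcal A}(\S')|\le 4\beta_n+M/n=(4n\beta_n+M)/n$.

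This establishes the $\c$-Lipschitz property with $\|\c\|_\infty\le(4n\beta_n+M)/n$, and Theorem~\ref{Concentration Inequality for Dependency Forest 1} then gives $\P(\Phi_{\mathcal A}(\S)-\E[\Phi_{\mathcal A}(\S)]\ge t)\le\exp\!\bigl(-2t^2/(\Lambda(G)\|\c\|_\infty^2)\bigr)\le\exp\!\bigl(-2n^2t^2/(\Lambda(G)(4n\beta_n+M)^2)\bigr)$, which is exactly the claim. I expect the only delicate point to be the bookkeeping in the bounded-differences estimate: making sure uniform stability is applied with sample size $n$ (not $n-1$), using $\S^{\setminus i}$ consistently as the bridge between $f^{\mathcal A}_\S$ and $f^{\mathcal A}_{\S'}$, and isolating the $j=i$ term, which is the only one contributing the $M/n$ rather than a further $2\beta_n$. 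Beyond that, the argument is a routine reduction to the already-proved general-graph concentration inequality.
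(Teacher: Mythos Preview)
Your proposal is correct and follows essentially the same approach as the paper: establish the bounded-differences property $|\Phi_{\mathcal A}(\S)-\Phi_{\mathcal A}(\S')|\le 4\beta_n+M/n$ by splitting into the $R$-part (bounded by $2\beta_n$ via bridging through $\S^{\setminus i}$) and the $\widehat R$-part (bounded by $2\beta_n+M/n$, separating the $j=i$ term), and then invoke Theorem~\ref{Concentration Inequality for Dependency Forest 1}. The paper packages the bounded-differences step as a separate lemma citing~\cite{bousquet2002stability}, but the argument is the same as yours.
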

Lemma \ref{stability concentration} is proved in two steps. First, we treat $\Phi_{\mathcal A}(\cdot)$ as an $n$-ary function and show that its Lipschitz coefficients are all bounded by $4\beta_n + M/n$. Second, regarding $\S$ as a random vector, we apply Theorem \ref{Concentration Inequality for Dependency Forest 1} to $\Phi_{\mathcal A}(\S)$.
For detail, see Subsection \ref{Phiisconcentrated} of the supplementary materials.


\begin{lm}
Given a sample $\S$ of size $n$ with dependency graph $G$,
assume that the learning algorithm $\A$ is $\beta_i$-uniformly stable for any $i\le n$.
Suppose the maximum degree of $G$ is $\Delta$. Let $\beta_{n, \Delta} = \max_{i \in [0,\Delta]} \beta_{n-i}$. It holds that
\[
\E [ \Phi_{\mathcal A}(\S) ] \le 2 \beta_{n, \Delta} (\Delta + 1).
\]
\label{stability expectation lemma via degree}
\end{lm}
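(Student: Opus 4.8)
The plan is to bound $\E[\Phi_{\mathcal A}(\S)] = \E[R(f^{\mathcal A}_{\S})] - \E[\widehat R(f^{\mathcal A}_{\S})]$ by rewriting both terms as expectations of losses and comparing them term by term via the uniform stability property. First I would write
\[
\E[\widehat R(f^{\mathcal A}_{\S})] = \frac{1}{n}\sum_{i=1}^n \E_{\S}\big[\ell(y_i, f^{\mathcal A}_{\S}(x_i))\big],
\]
and for the generalization error introduce, for each $i$, a fresh pair $(x,y)$ distributed as $D$ and independent of the training sample $\S$, so that $\E[R(f^{\mathcal A}_{\S})] = \E_{\S}\E_{(x,y)}[\ell(y, f^{\mathcal A}_{\S}(x))]$. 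The key step is to relate $\E_{\S}[\ell(y_i, f^{\mathcal A}_{\S}(x_i))]$ to $\E_{\S,(x,y)}[\ell(y, f^{\mathcal A}_{\S}(x))]$ by passing through the leave-one-out sample $\S^{\setminus i}$: since $(x_i,y_i)$ shares the marginal $D$ with $(x,y)$, if $(x_i,y_i)$ were independent of $\S^{\setminus i}$ the two quantities $\E[\ell(y_i, f^{\mathcal A}_{\S^{\setminus i}}(x_i))]$ and $\E[\ell(y, f^{\mathcal A}_{\S^{\setminus i}}(x))]$ would coincide. This is where the dependency graph enters: $(x_i,y_i)$ is \emph{not} independent of all of $\S^{\setminus i}$, only of those training points not adjacent to $i$ in $G$.

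To handle this, for each $i$ let $N_i \subseteq [n]\setminus\{i\}$ be the set of neighbors of $i$ in $G$, so $|N_i|\le \Delta$, and let $\S^{\setminus (N_i\cup\{i\})}$ denote $\S$ with the vertex $i$ and all its neighbors removed, a sample of size $n - 1 - |N_i| \ge n - 1 - \Delta$. Then $(x_i,y_i)$ is independent of $\S^{\setminus (N_i\cup\{i\})}$, and likewise so is the fresh pair $(x,y)$, so
\[
\E_{\S}\big[\ell(y_i, f^{\mathcal A}_{\S^{\setminus (N_i\cup\{i\})}}(x_i))\big] = \E_{\S,(x,y)}\big[\ell(y, f^{\mathcal A}_{\S^{\setminus (N_i\cup\{i\})}}(x))\big].
\]
Now I would remove the vertices $i$ and $N_i$ from $\S$ one at a time (in some fixed order), applying $\beta_j$-uniform stability at each removal step: deleting the first element changes the loss by at most $\beta_n$, the next by at most $\beta_{n-1}$, and so on, for a total of at most $|N_i|+1 \le \Delta+1$ deletions, each costing at most $\beta_{n,\Delta} = \max_{0\le i\le \Delta}\beta_{n-i}$. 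This bounds both $\big|\E_{\S}[\ell(y_i, f^{\mathcal A}_{\S}(x_i))] - \E_{\S}[\ell(y_i, f^{\mathcal A}_{\S^{\setminus(N_i\cup\{i\})}}(x_i))]\big|$ and the analogous quantity for the fresh pair by $(\Delta+1)\beta_{n,\Delta}$ each. Combining the two bounds with the identity above gives $\big|\E_{\S}[\ell(y_i, f^{\mathcal A}_{\S}(x_i))] - \E_{\S,(x,y)}[\ell(y, f^{\mathcal A}_{\S}(x))]\big| \le 2(\Delta+1)\beta_{n,\Delta}$, and averaging over $i\in[n]$ yields $\E[\Phi_{\mathcal A}(\S)] \le 2\beta_{n,\Delta}(\Delta+1)$.

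The main obstacle is the bookkeeping in the removal argument: one must be careful that the stability definition only compares $\S$ to $\S^{\setminus i}$ for a single deletion, so removing $|N_i|+1$ vertices requires chaining the inequality through a sequence of intermediate samples of sizes $n, n-1, \ldots, n-1-|N_i|$, invoking $\beta_j$-stability with the \emph{correct} index $j$ at each step (this is exactly why the hypothesis assumes $\beta_i$-stability for all $i\le n$, and why the definition $\beta_{n,\Delta} = \max_{i\in[0,\Delta]}\beta_{n-i}$ appears). A secondary subtlety is making sure the independence claim is applied correctly — it is crucial that after removing $i$ and all of $N_i$, the point $(x_i,y_i)$ and the fresh pair $(x,y)$ are both independent of the remaining sample, which follows directly from the definition of the dependency graph since $\{i\}$ and $[n]\setminus(N_i\cup\{i\})$ are non-adjacent.
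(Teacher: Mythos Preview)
Your proposal is correct and follows essentially the same approach as the paper: for each $i$ you remove the closed neighborhood $N_i\cup\{i\}$ from $\S$ one vertex at a time (the paper writes this as $N_G^+(i)$ and uses a telescoping decomposition), invoke uniform stability at each of the at most $\Delta+1$ steps to pick up a total of $2(\Delta+1)\beta_{n,\Delta}$, and then use that $(x_i,y_i)$ and the fresh pair $(x,y)$ are both independent of the reduced sample and share the marginal $D$ to make the remaining expectations coincide. The bookkeeping and independence subtleties you flag are exactly the ones the paper handles, in the same way.
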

The proof of the lemma is based on iterative perturbations on the training sample $\S$. A perturbation is essentially removing a data point from or adding a data point to $\S$. The property of uniform stability of the algorithm guarantees that each perturbation causes a discrepancy up to $\beta_{n,\Delta}$, and in total $2(\Delta +1)$ perturbations have to be made in order to \textit{eliminate} the dependency between a data point and the others. 
For detail, please refer to Subsection \ref{Phihassmallexpectation} of the supplementary materials.

Combining Lemma~\ref{stability concentration} and Lemma~\ref{stability expectation lemma via degree}, we immediately have
\begin{tm}
Given a sample $\S$ of size $n$ with dependency graph $G$, assume that the learning algorithm $\A$ is $\beta_i$-uniformly stable  for any $i\le n$.
Suppose the maximum degree $G$ is $\Delta$, and the 
loss function $\ell$ is bounded by $M$. 
Let $\beta_{n, \Delta} = \max_{i \in [0,\Delta]} \beta_{n-i}$.
For any $\delta \in (0, 1)$, with probability at least $1 - \delta$,
it holds that
\[
R(f^{\mathcal A}_{\S}) 
\le \widehat R (f^{\mathcal A}_{\S}) + 2 \beta_{n, \Delta} (\Delta + 1)
+ \frac{4n\beta_n + M}{n} \sqrt{\frac{\Lambda(G)\ln(1/\delta)}{2} }.
\]
\label{gen bound via sta}
\end{tm}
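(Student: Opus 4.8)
The plan is to obtain Theorem~\ref{gen bound via sta} directly from Lemma~\ref{stability concentration} and Lemma~\ref{stability expectation lemma via degree} by a standard inversion of the exponential tail bound. First I would recall that, by the definition introduced above, $\Phi_{\mathcal A}(\S) = R(f^{\mathcal A}_{\S}) - \widehat R(f^{\mathcal A}_{\S})$, so the asserted inequality is precisely a high-probability \emph{upper bound} on $\Phi_{\mathcal A}(\S)$, and it suffices to control $\Phi_{\mathcal A}(\S)$ by its expectation plus a deviation term, then control that expectation in turn.

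Second, I would invert the concentration estimate of Lemma~\ref{stability concentration}. Fixing $\delta\in(0,1)$ and requiring $\exp\!\left(-\dfrac{2n^2t^2}{\Lambda(G)(4n\beta_n+M)^2}\right)=\delta$, solving for $t$ gives
\[
t = \frac{4n\beta_n+M}{n}\sqrt{\frac{\Lambda(G)\ln(1/\delta)}{2}}.
\]
With this value of $t$, Lemma~\ref{stability concentration} guarantees that, with probability at least $1-\delta$,
\[
\Phi_{\mathcal A}(\S) < \E[\Phi_{\mathcal A}(\S)] + \frac{4n\beta_n+M}{n}\sqrt{\frac{\Lambda(G)\ln(1/\delta)}{2}}.
\]
Next I would invoke Lemma~\ref{stability expectation lemma via degree} to replace $\E[\Phi_{\mathcal A}(\S)]$ by the upper bound $2\beta_{n,\Delta}(\Delta+1)$, and finally substitute $\Phi_{\mathcal A}(\S) = R(f^{\mathcal A}_{\S}) - \widehat R(f^{\mathcal A}_{\S})$ and rearrange, which yields exactly
\[
R(f^{\mathcal A}_{\S}) \le \widehat R(f^{\mathcal A}_{\S}) + 2\beta_{n,\Delta}(\Delta+1) + \frac{4n\beta_n+M}{n}\sqrt{\frac{\Lambda(G)\ln(1/\delta)}{2}}
\]
with probability at least $1-\delta$.

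I do not expect a genuine obstacle in this step: the real work has already been carried out in the two preceding lemmas, namely verifying that $\Phi_{\mathcal A}(\cdot)$ is $(4\beta_n+M/n)\mathbf{1}$-Lipschitz so that Theorem~\ref{Concentration Inequality for Dependency Forest 1} applies, and running the iterative-perturbation argument that bounds $\E[\Phi_{\mathcal A}(\S)]$ by $2\beta_{n,\Delta}(\Delta+1)$. The combination above is elementary algebra; the only point worth a remark is that both lemmas concern the \emph{same} random quantity $\Phi_{\mathcal A}(\S)$, so the expectation bound may be plugged straight into the deviation inequality without any union bound or additional independence assumption.
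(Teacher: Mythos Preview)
Your proposal is correct and matches the paper's own argument: the paper states that Theorem~\ref{gen bound via sta} follows ``immediately'' by combining Lemma~\ref{stability concentration} and Lemma~\ref{stability expectation lemma via degree}, which is exactly the tail-bound inversion plus expectation substitution you carry out.
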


\begin{re}
It is well known that for many learning algorithms $\beta_n=O(1/n)$ \cite{bousquet2002stability}.
Thus, we often have 
$\beta_{n, \Delta} (\Delta + 1) 
\le \beta_{n-\Delta} (\Delta + 1) = O(\frac{\Delta}{n - \Delta})$,
which vanishes asymptotically if $\Delta = o(n)$.
The term $O\left(\sqrt{\Lambda(G)}/n\right)$ also vanishes asymptotically if 
$\Lambda(G)=o(n^2)$. As a result, in case of \textit{weak} dependence such as the examples in Subsection~\ref{Examples and Applications}, the generalization error is almost upper-bounded by the empirical error. We also observe that if the training data are i.i.d., Theorem~\ref{gen bound via sta} degenerates to the standard stability bound 
in~\cite{bousquet2002stability}, by applying $\Delta = 0$, $\beta_{n, \Delta} = \beta_n$, $\Lambda(G) = n$. 

\label{gen bound analysis}
\end{re}

\subsection{Application: Learning from $m$-dependent data}

We present a practical application in machine learning. 
Suppose there are linearly aligned locations, for example, real estates along a street. Let $y_i$ be the observation at location $i$, e.g., the house price, and $x_i$ stand for the random variable modeling geographical effect
at location $i$. Suppose that $x$'s are mutually independent and each $y_i$ is geographically influenced by a neighborhood of size at most $2q+1$. One hope to learn the model of $y$ from a sample 
$ \{( ( x_{i-q},\ldots,x_i,\ldots,x_{i+q} ), y_i )\}_{i\in [n]}$, where $n$ is the size of the sample.
This model accounts for the impact of local locations on house prices. 
Similar scenarios are frequently considered in spatial econometrics,
see~\cite{anselin2013spatial} for more examples.

This application is a special case of $m$-dependence, which is an important statistical model introduced by Hoeffding in~\cite{hoeffding1948central}. $m$-dependence has been studied extensively in probability, statistics, and combinatorics
\cite{diananda1953some,sen1968asymptotic,chen2005stein}.

\begin{df}[$m$-dependence~\cite{hoeffding1948central}]
For some $m, n \in \mathbb N_+$, a sequence of random variables $\{X_i\}_{i = 1}^n$ is called $m$-dependent if for any $i \in [n-m - 1]$, 
$\{X_j\}_{j = 1}^i$ is independent of $\{X_j\}_{j = i+m+1}^n$. 
\label{m-dependence result}
\end{df}

The upper part of Figure~\ref{chain} illustrates a dependency graph of $2$-dependent sequence $\{X_i\}_{i = 1}^n$. 

As illustrated in Figure~\ref{chain}, we divide an $m$-dependent sequence into blocks of size $m$, and sequentially map the blocks to vertices of a path of length $\left\lceil\frac{n}{m}\right\rceil$. This forest approximation leads to
\[
\Lambda(G)
\le \left(\left\lceil\frac{n}{m}\right\rceil- 1\right)( m+m )^2 + m^2 
\le 4mn = O(mn)
\]

\begin{figure}[H] 
\centering
\begin{subfigure}[c]{.45\textwidth}
\begin{tikzpicture}[scale=.8]
\node[] (0) at (-1, 0) {$G$};
\node[circle,draw=black,fill=white] (A) at (0, 0) {$$};
\node[circle,draw=black,fill=white] (B) at (1.5, 0) {$$};
\node[rectangle,draw=black,fill=white] (C) at (3, 0) {$$};
\node[rectangle,draw=black,fill=white] (D) at (4.5, 0) {$$};
\node[regular polygon,regular polygon sides=3,draw,scale=0.6,fill=white] (E) at (6, 0) {$$};
\node[regular polygon,regular polygon sides=3,draw,scale=0.6,fill=white] (F) at (7.5, 0) {$$};
\draw[out=-60, in=-120]  
(A) edge (B) (B) edge (C) (C) edge (D) (D) edge (E) (E) edge (F);
\draw[out=-90, in=-90]
(A) edge (C) (B) edge (D) (C) edge (E) (D) edge (F);
\begin{pgfonlayer}{background}
\draw[edge] (0, 0) edge (1.5, 0);
\draw[edge] (3, 0) edge (4.5, 0);
\draw[edge] (6, 0) edge (7.5, 0);
\end{pgfonlayer}
\end{tikzpicture}
\end{subfigure}

\vspace{-.2cm}
\begin{subfigure}[c]{.4\textwidth}
\begin{tikzpicture}
\node[] (i) at (-3.55,0) {$$};
\node[] (i) at (0,0) {$$};
\node[] (k) at (.28,-.4) {$\phi$};
\node[] (j) at (0,-.8) {$$};
\draw[->, thick]
(i) -- (j);
\end{tikzpicture}
\end{subfigure}

\begin{subfigure}[c]{.45\textwidth}
\begin{tikzpicture}[scale=.6]
\node[] (0) at (-4.35, 0) {$F$};
\node[circle,draw=black,fill=white] (A) at (0, 0) {$$};
\node[rectangle,draw=black,fill=white] (C) at (2, 0) {$$};
\node[regular polygon,regular polygon sides=3,draw,scale=0.6] (E) at (4, 0) {$$};
\draw[-]  
(A) edge (C) (C) edge (E);
\end{tikzpicture}
\end{subfigure}
\caption{A forest approximation of a $2$-dependent sequence. The approximation $\phi$ maps any vertex of $G$ to the vertex of $F$ having the same shape, so each gray belt stands for a pre-image set of $\phi$.}
\label{chain}
\end{figure}
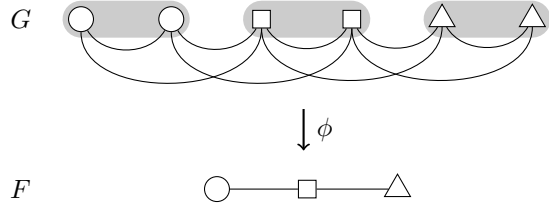

Combining Theorem~\ref{gen bound via sta} and the estimated forest complexity, we have

\begin{co}
Given an $m$-dependent sequence $\S$ of length $n$ as training sample,
assume that the learning algorithm $\A$ is $\beta_i$-uniformly stable  for any $i\le n$.
Suppose the 
loss function $\ell$ is bounded by $M$. 
For any $\delta \in (0, 1)$,
with probability at least $1 - \delta$,
it holds that
\[
R(f^{\A}_{\S}) 
\le \widehat R (f^{\A}_{\S}) + 2 \beta_{n, 2m} (2m + 1)
+ ( 4n\beta_n + M ) \sqrt{\frac{2 m\ln(1/\delta)}{ n } }.
\]
\label{learning m dependent}
\end{co}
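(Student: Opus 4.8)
The plan is to specialize Theorem~\ref{gen bound via sta} to the dependency graph of an $m$-dependent sequence, so that essentially only two quantities must be supplied: the maximum degree $\Delta$ of the graph and an upper bound on the forest complexity $\Lambda(G)$.

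First I would fix the dependency graph. By Definition~\ref{m-dependence result}, $(x_i,y_i)$ and $(x_j,y_j)$ can be dependent only when $|i-j|\le m$, so we may take $G$ to be the graph on $[n]$ with $\langle i,j\rangle\in E(G)$ iff $0<|i-j|\le m$ (the upper part of Figure~\ref{chain} shows the case $m=2$). Adding edges preserves the defining property of a dependency graph, so this choice is legitimate. Every vertex $i$ has at most the $2m$ neighbours $i\pm1,\dots,i\pm m$, hence the maximum degree is at most $2m$; since the correction term $2\beta_{n,\Delta}(\Delta+1)$ in Theorem~\ref{gen bound via sta} is non-decreasing in $\Delta$ (both $\beta_{n,\Delta}=\max_{i\in[0,\Delta]}\beta_{n-i}$ and $\Delta+1$ are), we may plug in $\Delta=2m$, which yields the term $2\beta_{n,2m}(2m+1)$.

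Second I would invoke the forest complexity estimate recorded just before the corollary. Partition $[n]$ into $\lceil n/m\rceil$ consecutive blocks of size at most $m$, and let $\phi$ map each block to a distinct vertex of a path $F$ on $\lceil n/m\rceil$ vertices, sending consecutive blocks to adjacent path vertices (Figure~\ref{chain}). Since any edge $\langle i,j\rangle$ of $G$ satisfies $|i-j|\le m$, its endpoints lie in the same or in consecutive blocks, so $(\phi,F)$ is a genuine forest approximation; each $|\phi^{-1}(u)|\le m$, whence by Definition~\ref{lambda}
\[
\Lambda(G)\le\lambda_{(\phi,F)}\le\Big(\Big\lceil\tfrac{n}{m}\Big\rceil-1\Big)(m+m)^2+m^2\le 4mn .
\]
Because the bound of Theorem~\ref{gen bound via sta} is monotone increasing in $\Lambda(G)$, this upper bound suffices and no optimality of the chosen forest approximation is needed.

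Finally I would substitute $\Delta=2m$ and $\Lambda(G)\le 4mn$ into Theorem~\ref{gen bound via sta}. The degree term gives $2\beta_{n,2m}(2m+1)$ directly, and the last term becomes
\[
\frac{4n\beta_n+M}{n}\sqrt{\frac{\Lambda(G)\ln(1/\delta)}{2}}\le\frac{4n\beta_n+M}{n}\sqrt{2mn\ln(1/\delta)}=(4n\beta_n+M)\sqrt{\frac{2m\ln(1/\delta)}{n}},
\]
which is exactly the claimed bound, valid with probability at least $1-\delta$. The only step that is not pure substitution is checking that the block-to-path map is a valid forest approximation and that it yields the $O(mn)$ bound on $\Lambda(G)$; this is the (mild) main obstacle, and it is dispatched by the observation that an edge of $G$ never spans more than one block boundary.
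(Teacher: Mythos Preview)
Your proposal is correct and follows essentially the same approach as the paper: the corollary is obtained by plugging the maximum degree $\Delta=2m$ and the forest-complexity estimate $\Lambda(G)\le 4mn$ (from the block-to-path approximation of Figure~\ref{chain}) into Theorem~\ref{gen bound via sta}. Your explicit justification that the bound is monotone in $\Delta$ and in $\Lambda(G)$, and that the block map is a valid forest approximation because edges of $G$ never span more than one block boundary, makes the argument slightly more self-contained than the paper's one-line ``combining'' statement, but the substance is identical.
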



Choose any uniformly stable learning algorithm $\A$ in~\cite{bousquet2002stability} with $\beta_n = O(1/n)$,
such as regularization algorithms in  RKHS. Apply it to the above mentioned house price prediction problem. Then for any fixed $q$, with high probability, Corollary~\ref{learning m dependent} leads to
$R(f^{\A}_{\S}) 
\le \widehat R (f^{\A}_{\S}) + O\left(\sqrt{\frac{\ln(1/\delta)}{ n } }\right)$ 
for sufficiently large $n$, matching the stability bound of the i.i.d.\ case in \cite{bousquet2002stability}.


\section{Conclusion and Future Work}
In this paper, we establish McDiarmid-type concentration inequalities for general functions of graph-dependent random variables.
We apply our concentration results to obtain a stability-based generalization error bound for learning from graph-dependent samples.
There are several possible extensions of this work. 
\begin{itemize}
\item \label{gamma} We provide upper bounds of the forest complexity 
for several classes of graphs.
It is an interesting algorithmic problem to efficiently estimate the forest complexity. One heuristic method to do this on a connected graph is via graph diameter, by merging vertices of the same distances to a peripheral vertex, resulting in a path as long as the diameter. Can the problem be solved approximately? 

\item If more information of the dependency structure is known, 
e.g., a dependency hypergraph~\cite{DBLP:conf/alt/WangGR17}, 
can we obtain better McDiarmid-type inequalities and tighter generalization bounds?

\item In \cite{mohri2008stability,mohri2010stability,kuznetsov2017generalization}, generalization error is defined different than that in this paper.
The relationship between these two definitions has been discussed in \cite{mohri2008stability,mohri2010stability}. 
It is a natural question whether our results can be adapted to that definition. 

\item There are some newly introduced dependency graph models such as
thresholded dependency graphs~\cite{lampert2018dependency} 
and weighted dependency graphs~\cite{dousse2016weighted,feray2018weighted}.
Can the problem in this paper be solved under these new models?


\end{itemize}

\subsubsection*{Acknowledgments}

Rui (Ray) Zhang would like to thank Nick Wormald for valuable comments on an early version of this paper. Yuyi Wang would like to thank Dr.\ Ond\v{r}ej Ku\v{z}elka for very helpful discussions. Liwei Wang would like to thank Yunchang Yang for very helpful discussions. 
Xingwu Liu's work is partially supported by the National Key Research and Development Program of China (Grant No. 2016YFB1000201), the National Natural Science Foundation of China (61420106013), State Key Laboratory of Computer Architecture Open Fund (CARCH3410), and Youth Innovation Promotion Association of Chinese Academy of Sciences.

\bibliographystyle{unsrt}
\bibliography{nips}

\newpage
\appendix
\section{Omitted Proofs in Section 3}
\label{Omitted Proofs in Section 3}

\subsection{Proof of Theorem~\ref{Concentration Inequality for Dependency Tree}}\label{tree}

Given a random vector $\X=(X_1,\ldots,X_n)$
taking values in a product space $\mathbf{\Omega}=\prod_{i\in [n]}\Omega_i$.
For any set $S\subseteq [n]$, 
we denote $\X_S=\{X_i\}_{i\in S}$, and $\mathbf{\Omega}_S=\prod_{i\in S}\Omega_i$ for convenience.
The proof of Theorem \ref{Concentration Inequality for Dependency Tree} will rest on Lemma \ref{McDiarmid's Lemma}, which intuitively means that the small deviation of 
\[
\E [ f(\X) | X_1=x_1,\ldots,X_{i-1}=x_{i-1},X_i=x_i]
\]
with respect to $x_i$ for all $i$ leads to a high concentration of $f(\X)$ around its expectation.
Our task is thus reduced to show that when $x_1,\ldots,x_{i-1}$ is fixed, 
\[
\E [ f(\X) | X_1=x_1,\ldots,X_{i-1}=x_{i-1},X_i=x_i]-\E [ f(\X) | X_1=x_1,\ldots,X_{i-1}=x_{i-1},X_i=x'_i]
\]
is small for any $x_i$ and $x_i'$. This will be true due to Lemma \ref{martingale differences lemma}, if there is a good coupling, namely, jointly distributed variables $(\mathbf{Y},\mathbf{Z})$ whose Hamming distance is small and whose marginals are the distributions of $\X$ conditional on 
$\{ X_1=x_1,\ldots,X_{i-1}=x_{i-1},X_i=x_i \}$ and on 
$\{ X_1=x_1,\ldots,X_{i-1}=x_{i-1},X_i=x'_i \}$, respectively. 
Hence, the main part of the proof is to construct such a coupling (Lemma \ref{coupling}) whose feasibility relies on the strong independence among $\X$ (Lemma \ref{lm:indepofxi}).
First of all, recall a lemma in literature. 

\begin{lm}[\cite{mcdiarmid1989method}]
If for any $j \in [n]$ and $\mathbf{y}\in \mathbf{\Omega}_{[j-1]}$,
there is $b_j\ge 0$ such that 
\begin{equation}
\sup_{\xi\in \Omega_j} \E [ f(\X) | \X_{[j-1]}= \y ,X_j=\xi ]
- \inf_{\xi\in \Omega_j} \E [ f(\X) | \X_{[j-1]}= \y ,X_j=\xi ] \le b_j 
\end{equation}
then for any $t > 0$,
\[
\P( f(\X) - \E  [f(\X)] \ge t )
\le \exp \left(  - \dfrac{2t^2}{ \sum_{j=1}^n b_j^2 }  \right). 
\]
\label{McDiarmid's Lemma}
\end{lm}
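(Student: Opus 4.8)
The plan is to run the classical Azuma--Hoeffding (``method of bounded differences'') argument on the Doob martingale of $f(\X)$. First note that $f$ is automatically bounded: any two points of $\mathbf{\Omega}$ differ in at most $n$ coordinates, so the oscillation of $f$ is at most $\sum_{i=1}^n c_i<\infty$, hence $\E[f(\X)]$ is finite and all moment generating functions below exist. Since each $\Omega_i$ is Polish, regular conditional distributions exist, so I would define $Z_j=\E[f(\X)\mid X_1,\ldots,X_j]$ for $j=0,1,\ldots,n$, so that $Z_0=\E[f(\X)]$ and $Z_n=f(\X)$ almost surely. With $D_j=Z_j-Z_{j-1}$ and $\mathcal F_{j-1}=\sigma(X_1,\ldots,X_{j-1})$ this gives the telescoping identity $f(\X)-\E[f(\X)]=\sum_{j=1}^n D_j$, where $\E[D_j\mid\mathcal F_{j-1}]=0$.

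The key step is to show that, conditionally on $\mathcal F_{j-1}$, $D_j$ lies in an $\mathcal F_{j-1}$-measurable interval of length at most $b_j$. Let $g_j(\y,\xi)=\E[f(\X)\mid \X_{[j-1]}=\y,\,X_j=\xi]$, which is measurable via the regular-conditional-distribution machinery. Then $Z_j=g_j(\X_{[j-1]},X_j)$ while $Z_{j-1}=\E[g_j(\X_{[j-1]},X_j)\mid \mathcal F_{j-1}]$, so on the event $\{\X_{[j-1]}=\y\}$ both $Z_j$ and $Z_{j-1}$ take values in the interval $\big[\,\inf_{\xi\in\Omega_j} g_j(\y,\xi),\ \sup_{\xi\in\Omega_j} g_j(\y,\xi)\,\big]$, whose length is at most $b_j$ by hypothesis; one checks that $\y\mapsto\inf_\xi g_j(\y,\xi)$ and $\y\mapsto\sup_\xi g_j(\y,\xi)$ are measurable (passing to a measurable version if necessary). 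Hence, conditionally on $\mathcal F_{j-1}$, $D_j$ is a mean-zero random variable supported in an interval of length at most $b_j$.

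Next I would apply Hoeffding's lemma conditionally: for $s>0$, a mean-zero random variable supported in an interval of length $\ell$ has conditional moment generating function at most $e^{s^2\ell^2/8}$, so $\E[e^{sD_j}\mid\mathcal F_{j-1}]\le e^{s^2 b_j^2/8}$. Peeling the conditional expectations off from the outermost coordinate inwards via the tower rule,
\[
\E\!\big[e^{s(f(\X)-\E[f(\X)])}\big]=\E\!\Big[e^{s\sum_{j=1}^{n}D_j}\Big]\le \exp\!\Big(\tfrac{s^2}{8}\textstyle\sum_{j=1}^{n} b_j^2\Big).
\]
Markov's inequality then gives $\P(f(\X)-\E[f(\X)]\ge t)\le \exp\!\big(\tfrac{s^2}{8}\sum_{j} b_j^2-st\big)$ for every $s>0$, and optimizing by taking $s=4t/\sum_{j} b_j^2$ yields the claimed bound $\exp\!\big(-2t^2/\sum_{j=1}^n b_j^2\big)$.

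As for difficulty: this is essentially the textbook proof of McDiarmid's inequality, so there is no substantial obstacle. The only points that need care are measure-theoretic: the existence of regular conditional probabilities (precisely why the $\Omega_i$ are assumed Polish), the measurability of the conditional-range functions $\inf_\xi g_j(\cdot,\xi)$ and $\sup_\xi g_j(\cdot,\xi)$ that pin down the martingale-difference interval, and the conditional form of Hoeffding's lemma (obtained as usual from convexity of $\xi\mapsto e^{s\xi}$ together with $\E[D_j\mid\mathcal F_{j-1}]=0$). I would expect the write-up either to dispatch these briefly or simply to cite \cite{mcdiarmid1989method}.
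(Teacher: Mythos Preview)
Your proposal is correct and is exactly the standard Doob-martingale/Azuma--Hoeffding argument behind this result. Note, however, that the paper does not actually prove this lemma: it is simply quoted from \cite{mcdiarmid1989method} and used as a black box, precisely the alternative you anticipated in your last sentence.
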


By this lemma, it suffice to show that the small deviation of 
$\E [ f(\X) | X_1=x_1,\ldots,X_{i-1}=x_{i-1},X_i=x_i]$ with respect to $x_i$ for all $i$ is small
to prove Theorem~\ref{Concentration Inequality for Dependency Tree}. 
Before continuing the proof, we assume that
\begin{description}
\item[Well-rooted:] $G$ is rooted at the vertex $n$ and $c_n=c_{\min}$. \label{Well-rooted}
\item[Well-sorted:] For any $i,j\in V(G)$, $j$ is a descendent of $i$ only if $j<i$. \label{ordered}
\end{description}
These assumptions will not lose generality, since we can relabel the variables $X_1,...,X_n$ to meet the requirements.

For any non-root vertex $i\in V(G)$, let $p(i)$ be the parent vertex of $i$.
For the rest of the section, arbitrarily fix $i\in [n]$ and define $S=[i+1,n]\setminus \{ p(i) \}$, where $[j,k]$ stands for the set $\{j,\ldots,k\}$ of integers. Arbitrarily choose a vector $\mathbf{x}=(x_1,\ldots,x_n)\in \mathbf{\Omega}$ and an element $x'_i\in \Omega_i$. Let 
$\mathbf{x'}=(x_1,\ldots,x_{i-1},x'_i,x_{i+1},\ldots,x_n)$. 
We have the following technical lemma, indicating that $\X_S$ is independent of $X_i$ if $\X_{[i-1]}$ is given.
\begin{lm}
For any vector $\mathbf{y}\in \mathbf{\Omega}_S$, 
\[
\P( \X_S=\mathbf{y} | \X_{[i]}=\mathbf{x}_{[i]})
= \P( \X_S=\mathbf{y} | \X_{[i]}=\mathbf{x}'_{[i]}).
\]
\label{lm:indepofxi}
\end{lm}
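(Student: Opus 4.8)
\textbf{Proof plan for Lemma~\ref{lm:indepofxi}.}
The plan is to exploit the structure of the rooted tree together with the well-sorted assumption to show that, conditional on $\X_{[i-1]}$, the variable $X_i$ and the block $\X_S$ depend on disjoint ``branches'' of the tree and hence are conditionally independent; in particular the conditional law of $\X_S$ given $\X_{[i]}$ does not see the value of $X_i$ at all. First I would observe the key combinatorial fact about the sorting: since $j$ is a descendant of $i$ only if $j<i$, every vertex in $S=[i+1,n]\setminus\{p(i)\}$ is \emph{not} a descendant of $i$. Combined with the fact that $p(i)$ itself is removed from $S$, no vertex of $S$ is separated from the root side of the tree only through $i$; more precisely, in the forest $G\setminus\{i\}$ every vertex of $S$ lies in a connected component that also contains some vertex of $[i-1]$ (namely, walking up toward the root from any $s\in S$ one never passes through $i$, because that would make $s$ a descendant of $i$, and one reaches either the root $n\in[i-1]$ when $i\neq n$, or $p(i)$, whose own parent is again not $i$). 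This is the step I expect to need the most care: making the ``every component of $G\setminus\{i\}$ meeting $S$ also meets $[i-1]$'' claim fully rigorous, handling the root vertex and the descendants-of-$p(i)$ that are smaller than $i$ correctly.

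Once that separation is established, the argument is a routine application of the dependency-graph definition. I would split $[n]\setminus\{i\}$ into the set $A$ of vertices reachable from the root without passing through $i$ (equivalently, $[i-1]$ together with $S$ and all their relatives on the root side) and $B=$ the remaining vertices, i.e. the union of the subtrees hanging strictly below $i$. By construction $S\subseteq A$, $[i-1]$ has the property that its complement-in-$A$ part is fixed, and $A$ and $\{i\}$ are non-adjacent in $G$ \emph{after} we note that the only $G$-neighbours of $i$ are its parent $p(i)\in [i-1]$ and its children, which lie in $B$. Hence by property~2 of dependency graphs, $\X_A$ is independent of $X_i$. Since $S\subseteq A$ and $[i-1]\subseteq A$, conditioning further on $X_i$ does not change the conditional distribution of $\X_S$ given $\X_{[i-1]}$:
\[
\P(\X_S=\y\mid \X_{[i-1]}=\x_{[i-1]},X_i=x_i)
=\P(\X_S=\y\mid \X_{[i-1]}=\x_{[i-1]}),
\]
and the same identity holds with $x_i$ replaced by $x_i'$; the two right-hand sides are identical, which is exactly the claimed equality $\P(\X_S=\y\mid\X_{[i]}=\x_{[i]})=\P(\X_S=\y\mid\X_{[i]}=\x'_{[i]})$.

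One technical point I would be careful about: the displayed equalities are written for atoms $\{\X_{[i-1]}=\x_{[i-1]}\}$, which may have probability zero in a Polish space, so strictly speaking I would phrase the conditional independence in the measure-theoretic sense — $\X_A\perp X_i\mid$ (nothing), or rather $\sigma(\X_A)$ and $\sigma(X_i)$ are independent — and then derive that a regular conditional distribution of $\X_S$ given $\X_{[i]}$ can be taken to be a function of $\x_{[i-1]}$ alone, independent of the $i$-th coordinate. For the purposes of feeding this into Lemma~\ref{coupling} and ultimately Lemma~\ref{McDiarmid's Lemma}, this ``a.e.\ version independent of $x_i$'' is all that is needed, so I would state the lemma with that understanding and not belabor the null-set bookkeeping. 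The essential mathematical content remains the tree-separation observation in the first paragraph; everything after it is bookkeeping with the dependency-graph axioms.
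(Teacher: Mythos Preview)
There is a genuine gap. Your key step is the claim that ``$A$ and $\{i\}$ are non-adjacent in $G$,'' where $A$ is the set of vertices reachable from the root without passing through $i$. But $p(i)\in A$ (walk from the root down to $p(i)$; this path never enters the subtree below $i$), and $p(i)$ is adjacent to $i$, so $A$ and $\{i\}$ \emph{are} adjacent. Relatedly, you write ``$p(i)\in[i-1]$,'' which reverses the Well-sorted convention: descendants carry smaller labels, so $p(i)>i$ and $p(i)\in[i+1,n]$. Even after fixing the labeling, the neighbours of $i$ are $p(i)\in A$ and the children of $i$, which lie in $[i-1]$; hence any set containing either $p(i)$ or all of $[i-1]$ is adjacent to $\{i\}$, and the dependency-graph axiom gives no independence between $X_i$ and $\X_A$. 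Note that dependency graphs only guarantee \emph{unconditional} independence of non-adjacent vertex sets; there is no Markov separation property available, so you cannot recover $X_i\perp \X_S\mid \X_{[i-1]}$ by other means either.

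The paper repairs exactly this point by replacing $\{i\}$ with the whole subtree $V(T_i)$ rooted at $i$. The only vertex outside $V(T_i)$ adjacent to $V(T_i)$ is $p(i)$, and $p(i)$ has been deliberately excluded from $S$ and (since $p(i)>i$) also does not lie in $[i-1]\setminus V(T_i)$. Two short claims establish $N_G^+(T_i)\cap S=\emptyset$ and $N_G^+(T_i)\cap([i-1]\setminus V(T_i))=\emptyset$, so the axiom yields $\X_{V(T_i)}\perp(\X_S,\X_{[i-1]\setminus V(T_i)})$. Since $[i]=V(T_i)\cup([i-1]\setminus V(T_i))$, conditioning on $\X_{[i]}$ adds to $\X_{[i-1]\setminus V(T_i)}$ only information about $\X_{V(T_i)}$, which is independent of $\X_S$; hence the conditional law of $\X_S$ depends only on $\x_{[i-1]\setminus V(T_i)}$, identical for $\x$ and $\x'$. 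Your plan is structurally on the right track---it is a separation argument---but the block to be separated is $V(T_i)$, not the singleton $\{i\}$.
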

\begin{proof}
Let $T_i$ be the subtree of $G$ that is rooted at $i$. Our basic idea is to prove the stronger property that $\X_S$ is independent of the other parts of $\X_{[i]}$ if $\X_{[i-1]\setminus V(T_i)}$ is given. Since $[i]=V(T_i)\bigcup ([i-1]\setminus V(T_i))$, it suffices to show that $\X_{V(T_i)}$ is independent of $\{\X_{S},\X_{[i-1]\setminus V(T_i)}\}$, which in turn is reduced to prove the following two claims due to the definition of the dependency graphs.
\begin{description}
\item[Claim 1]: $N^+_G(T_i)\bigcap ([i-1]\setminus V(T_i))=\emptyset$, where $N^+_G(T_i)=\bigcup_{k\in V(T_i)}N^+_G(k)$. \\
Proof of Claim 1: Arbitrarily choose $j\in [i-1]\setminus V(T_i)$. Suppose for contradiction that $j\in N_G(k)$ for some $k\in V(T_i)$, namely, $j$ is either a child or the parent of $k$. Since $j\notin V(T_i)$, we must have $j=p(k)$ and $k=i$, which implies $j>i$ due to the Assumption Well-rooted. A contradiction is reached, so $N_G(T_i)\bigcap ([i-1]\setminus V(T_i)) =\emptyset$. Because $N^+_G(T_i)\bigcap ([i-1]\setminus V(T_i)) =N_G(T_i)\bigcap ([i-1]\setminus V(T_i))$, Claim 1 holds.
\item[Claim 2]: $N^+_G(T_i) \bigcap S=\emptyset$. 
\\
Proof of Claim 2: Arbitrarily choose $j\in S=\{i+1,\ldots,n\}\setminus N_G(i)$. One immediately has $j\notin N^+_G(i)$. Suppose for contradiction that $j\in N^+_G(T_i)$. Then $j\in N^+_G(k)$ for some descendent $k$ of $i$, which means that either $j=i$ or $j$ is a descendent of $i$. This in turn means that $j\le i$ due to the Assumption Well-sorted. A contradiction is reached, so  Claim 2 holds.
\end{description}
Since $G$ is a dependency graph of $\X$, Claims 1 and 2 indicate that $\X_{V(T_i)}$ is independent of $\{\X_{S},\X_{[i-1]\setminus V(T_i)}\}$
. Then
\[
\begin{array}{rl}
&\P(\X_S=\mathbf{y}| X_j=x_j, j\in [i-1]\setminus V(T_i))\\
=&\P(\X_S=\mathbf{y}| X_j=x_j, j\in ([i-1]\setminus V(T_i))\bigcup V(T_i))  \\
=&\P(\X_S=\mathbf{y}| X_j=x_j, j\in [i])\\
=&\P( \X_S=\mathbf{y} | \X_{[i]}=\mathbf{x}_{[i]}).
\end{array}
\]
Likewise, we also have 
\[
\P(\X_S= \y | X_j=x'_j, j\in [i-1]\setminus V(T_i))\\
=\P( \X_S= \y  | \X_{[i]}=\x '_{[i]})
\]
Since $\x $ and $\x '$ differ only in the $i$-th entry,
\[
\P(\X_S= \y | X_j=x_j, j\in [i-1]\setminus V(T_i))
=\P(\X_S= \y | X_j=x'_j, j\in [i-1]\setminus V(T_i)).
\]
As a result, $\P( \X_S= \y  | \X_{[i]}=\x _{[i]})
= \P( \X_S= \y  | \X_{[i]}=\x '_{[i]})$,
this completes the proof of Lemma~\ref{lm:indepofxi}.
\end{proof}
Then we construct the jointly-distributed random vectors $(\mathbf{Y},\mathbf{Z})\in \mathbf{\Omega}^2$ with respect to the fixed $i,\mathbf{x}$, and $\mathbf{x}'$. Specifically, $\mathbf{Y}=(Y_1,\ldots,Y_n)$ and $\mathbf{Z}=(Z_1,\ldots,Z_n)$ are defined as below.
\begin{enumerate}
\item $\Y_{[i]}=\x _{[i]}$
\item For any vector $ \y \in \mathbf{\Omega}_{[i+1,n]}$,
\[
\P(\Y_{[i+1,n]}= \y )=\P(\X_{[i+1,n]}= \y |\X_{[i]}=\x _{[i]})
\]
\item $\mathbf{Z}_{[i]}=\mathbf{x'}_{[i]}, \mathbf{Z}_S=\Y_S$.
\item For any vector $\mathbf{z}\in \mathbf{\Omega}_S$ and element $z\in \Omega_{p(i)}$,
\[
\P(Z_{p(i)}=z|\mathbf{Z}_S=\mathbf{z})
=\P(X_{p(i)}=z|\X_{[i]}=\x '_{[i]},\X_S=\mathbf{z})
\]
\end{enumerate}
The next lemma states that $(\mathbf{Y},\mathbf{Z})$ has the desired marginal distribution.
\begin{lm}\label{coupling}
For any vector $\mathbf{y}\in\mathbf{\Omega}_{[i+1,n]}$, we have 
\begin{enumerate}
\item $\P(\mathbf{Y}_{[i+1,n]}=\mathbf{y})=\P(\X_{[i+1,n]}=\mathbf{y}|\X_{[i]}=\mathbf{x}_{[i]})$, \label{marginalofY}
\item $\P(\mathbf{Z}_{[i+1,n]}=\mathbf{y})=\P(\X_{[i+1,n]}=\mathbf{y}|\X_{[i]}=\mathbf{x}'_{[i]})$. \label{marginalofZ}
\end{enumerate}
\end{lm}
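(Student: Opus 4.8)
The plan is that Part~\ref{marginalofY} requires no argument, being verbatim the defining distribution of $\Y_{[i+1,n]}$; all the content is in Part~\ref{marginalofZ}.

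The starting point for Part~\ref{marginalofZ} is the observation that $[i+1,n]$ is the \emph{disjoint} union of $S$ and $\{p(i)\}$: by the Assumption Well-sorted each proper ancestor of $i$ carries a label larger than $i$, so $p(i)>i$ and hence $p(i)\in[i+1,n]$, while $S=[i+1,n]\setminus\{p(i)\}$ by definition. Therefore every $\y\in\mathbf{\Omega}_{[i+1,n]}$ splits as $\y=(\y_S,y_{p(i)})$ with $\y_S\in\mathbf{\Omega}_S$ and $y_{p(i)}\in\Omega_{p(i)}$. (The degenerate case $i=n$, where $p(i)$ is undefined and $S=\emptyset$, is trivial and may be set aside.)

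I would then compute by the chain rule, substituting $\Z_S=\Y_S$ and the prescribed conditional law of $Z_{p(i)}$ given $\Z_S$:
\[
\P(\Z_{[i+1,n]}=\y)=\P(\Z_S=\y_S)\,\P\!\left(Z_{p(i)}=y_{p(i)}\mid\Z_S=\y_S\right)=\P(\Y_S=\y_S)\,\P\!\left(X_{p(i)}=y_{p(i)}\mid\X_{[i]}=\x'_{[i]},\X_S=\y_S\right).
\]
Marginalizing the defining distribution of $\Y_{[i+1,n]}$ over $\Omega_{p(i)}$ gives $\P(\Y_S=\y_S)=\P(\X_S=\y_S\mid\X_{[i]}=\x_{[i]})$, and Lemma~\ref{lm:indepofxi} is exactly the statement that this equals $\P(\X_S=\y_S\mid\X_{[i]}=\x'_{[i]})$. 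Substituting and running the chain rule backwards,
\[
\P(\Z_{[i+1,n]}=\y)=\P(\X_S=\y_S\mid\X_{[i]}=\x'_{[i]})\,\P\!\left(X_{p(i)}=y_{p(i)}\mid\X_{[i]}=\x'_{[i]},\X_S=\y_S\right)=\P(\X_{[i+1,n]}=\y\mid\X_{[i]}=\x'_{[i]}),
\]
which is Part~\ref{marginalofZ}.

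I do not expect a serious obstacle here: the argument is essentially bookkeeping, and the one substantive ingredient is Lemma~\ref{lm:indepofxi}, which is precisely what lets one swap the conditioning from $\x_{[i]}$ to $\x'_{[i]}$ inside the $\X_S$-marginal. The points that call for a little care are (a) verifying the partition $[i+1,n]=S\cup\{p(i)\}$, so that $p(i)$ is genuinely a coordinate outside $S$ and the chain-rule split is legitimate; and (b) well-definedness of the conditional probability used in the construction of $\Z$ --- one should note that whenever $\P(\Z_S=\y_S)>0$ the event $\{\X_{[i]}=\x'_{[i]},\X_S=\y_S\}$ also has positive probability, which again follows from Lemma~\ref{lm:indepofxi} via $\P(\X_S=\y_S\mid\X_{[i]}=\x'_{[i]})=\P(\X_S=\y_S\mid\X_{[i]}=\x_{[i]})=\P(\Y_S=\y_S)=\P(\Z_S=\y_S)$.
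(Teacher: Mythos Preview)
Your proposal is correct and follows essentially the same argument as the paper: Part~\ref{marginalofY} is immediate from the definition of $\Y$, and for Part~\ref{marginalofZ} you split $[i+1,n]=S\cup\{p(i)\}$, apply the chain rule, identify $\P(\Y_S=\y_S)=\P(\X_S=\y_S\mid\X_{[i]}=\x_{[i]})$, and invoke Lemma~\ref{lm:indepofxi} to swap the conditioning to $\x'_{[i]}$ before recombining. Your additional remarks on the partition, the degenerate case $i=n$, and the positivity needed for the conditional in the construction of $\Z$ are careful touches that the paper leaves implicit.
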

\begin{proof}
(\ref{marginalofY}) holds by the definition of $\mathbf{Y}$.
To prove (\ref{marginalofZ}), arbitrarily choose $\mathbf{y}=(y_{i+1},\ldots,y_n)\in\mathbf{\Omega}_{[i+1,n]}$.  Then we have
\[
\begin{array}{rl}
\P(\mathbf{Z}_{[i+1,n]}=\mathbf{y})
=&\P(\mathbf{Z}_S=\mathbf{y}_S)\P(Z_{p(i)}=y_{p(i)}|\mathbf{Z}_S=\mathbf{y}_S)\\
=&\P(\mathbf{Y}_S=\mathbf{y}_S)\P(Z_{p(i)}=y_{p(i)}|\mathbf{Z}_S=\mathbf{y}_S)\\
=& \P(\X_S=\mathbf{y}_S|\X_{[i]}=\mathbf{x}_{[i]})
\P(X_{p(i)}=y_{p(i)}|\X_{[i]}=\mathbf{x}'_{[i]},\X_S=\mathbf{y}_S)\\
=& \P(\X_S=\mathbf{y}_S|\X_{[i]}=\mathbf{x}'_{[i]})
\P(X_{p(i)}=y_{p(i)}|\X_{[i]}=\mathbf{x}'_{[i]},\X_S=\mathbf{y}_S)\\
=& \P(\X_{[i+1,n]}=\mathbf{y}|\X_{[i]}=\mathbf{x}'_{[i]}).
\end{array}
\] 
where the fourth equality is due to Lemma~\ref{lm:indepofxi}.
\end{proof}

\begin{lm}
$\E[ f(\X)| \X_{[i]}=\mathbf{x}_{[i]}] - \E[ f(\X)| \X_{[i]}=\mathbf{x}'_{[i]} ]  
\le c_i + c_{p(i)}$.
\label{martingale differences lemma}
\end{lm}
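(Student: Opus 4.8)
The plan is to build the coupling $(\mathbf{Y},\mathbf{Z})$ from the discussion above and to estimate the left-hand side using the Lipschitz property of $f$ together with the fact that $\mathbf{Y}$ and $\mathbf{Z}$ differ in at most two coordinates. First I would note that by Lemma~\ref{coupling}, $\mathbf{Y}$ has the distribution of $\X$ conditioned on $\{\X_{[i]}=\mathbf{x}_{[i]}\}$ and $\mathbf{Z}$ has the distribution of $\X$ conditioned on $\{\X_{[i]}=\mathbf{x}'_{[i]}\}$. Hence
\[
\E[ f(\X)\mid \X_{[i]}=\mathbf{x}_{[i]}] - \E[ f(\X)\mid \X_{[i]}=\mathbf{x}'_{[i]} ]
= \E\big[ f(\mathbf{Y}) - f(\mathbf{Z}) \big].
\]
So it suffices to bound $f(\mathbf{Y})-f(\mathbf{Z})$ pointwise (almost surely) by $c_i + c_{p(i)}$.

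Next I would examine where $\mathbf{Y}$ and $\mathbf{Z}$ can disagree. By construction, $\mathbf{Y}_{[i]}=\mathbf{x}_{[i]}$ and $\mathbf{Z}_{[i]}=\mathbf{x}'_{[i]}$, and these agree on $[i-1]$ and differ (possibly) only in coordinate $i$. On the index set $S=[i+1,n]\setminus\{p(i)\}$ we have $\mathbf{Z}_S=\mathbf{Y}_S$ by definition, so they agree there. The only remaining coordinate is $p(i)\in[i+1,n]$, where $\mathbf{Y}$ and $\mathbf{Z}$ may differ. Therefore $\mathbf{1}_{\{Y_k\neq Z_k\}}=0$ for all $k\notin\{i,p(i)\}$, and the $\c$-Lipschitz condition gives
\[
f(\mathbf{Y}) - f(\mathbf{Z}) \le \sum_{k=1}^n c_k \mathbf{1}_{\{Y_k\neq Z_k\}} \le c_i + c_{p(i)}
\]
pointwise. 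Taking expectations yields the claim.

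The main obstacle is really the correctness of the coupling that feeds into this argument, i.e., verifying that the distributions constructed in items 1--4 before Lemma~\ref{coupling} are genuinely well-defined joint distributions and that Lemma~\ref{coupling}(\ref{marginalofZ}) holds — and that hinges on Lemma~\ref{lm:indepofxi}, the conditional independence of $\X_S$ from $X_i$ given $\X_{[i-1]}$. Once those are in hand, the present lemma is a short computation: rewrite the difference of conditional expectations as $\E[f(\mathbf{Y})-f(\mathbf{Z})]$, observe $\mathbf{Y}$ and $\mathbf{Z}$ differ in at most the two coordinates $i$ and $p(i)$, and invoke $\c$-Lipschitzness. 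One should also double-check the edge case where $i$ is the root (so $p(i)$ is undefined): there $S=[i+1,n]=\emptyset$ cannot happen for the root $n$ since then the bound should read $c_n=c_{\min}$ with no parent term, which is consistent with Lemma~\ref{McDiarmid's Lemma} producing the extra $c_{\min}^2$ summand for the root coordinate.
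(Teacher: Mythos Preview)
Your proposal is correct and follows essentially the same approach as the paper: rewrite the difference of conditional expectations as $\E[f(\mathbf{Y})-f(\mathbf{Z})]$ via Lemma~\ref{coupling}, then apply the $\c$-Lipschitz bound together with the observation that $\mathbf{Y}$ and $\mathbf{Z}$ can differ only at coordinates $i$ and $p(i)$. Your additional remark about the root case $i=n$ is a valid side observation (the paper implicitly treats only non-root $i$ in this lemma and handles the root separately via the $c_{\min}^2$ term in the final sum).
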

\begin{proof}
By the definition of random vectors $\mathbf{Y},\mathbf{Z}$ and Lemma \ref{coupling}, 
\[
\begin{array}{rl}
\E[ f(\X)| \X_{[i]}=\mathbf{x}_{[i]}] - \E[ f(\X)| \X_{[i]}=\mathbf{x}'_{[i]} ]
=& \E[ f(\mathbf{Y}) ] - \E[ f(\mathbf{Z}) ] \\
=& \E[ f(\mathbf{Y})-f(\mathbf{Z})] \\
\le & \E\left[ \sum^n_{j=1}c_j \mathbf{1}_{Y_j\ne Z_j} \right] \\
\le&c_i+c_{p(i)}.
\end{array}
\]
the first equality is due to the coupling constructed before, 
and the first inequality is by triangle inequality and $\c$-Lipschitz properties of $f$.
\end{proof}
We are now ready to prove Theorem~\ref{Concentration Inequality for Dependency Tree}.
\begin{proof}[Proof of Theorem~\ref{Concentration Inequality for Dependency Tree}]
By Lemma~\ref{McDiarmid's Lemma} and Lemma~\ref{martingale differences lemma}
\[
\begin{array}{rl}
\P(f(\X) - \E  [f(\X)] \ge t) 
\le& \exp \left( -\dfrac{2t^2}
{ \sum_{j \in V(G) \setminus \{n\} }( c_j + c_{p(j)} )^2 + c_n^2} \right) \\
=& \exp \left( -\dfrac{2t^2}
{ \sum_{\langle j,k \rangle \in E(G)} ( c_j+c_k )^2 + c_{\min}^2 } \right)
\end{array}
\]
the last equality is because the root $n$ has no parent
and the \textbf{Well-rooted} assumption.
\end{proof}

\subsection{Proof of Theorem~\ref{Concentration Inequality for Dependency forest}}

\begin{proof}[Proof of Theorem~\ref{Concentration Inequality for Dependency forest}]\label{forest}
The proof is similar to that of Theorem~\ref{Concentration Inequality for Dependency Tree}. Without loss of generality, we assume that each component of the forest $G$ are well-rooted and well-sorted. Then the proofs of Lemma \ref{lm:indepofxi}-\ref{martingale differences lemma} remain valid, since variables in different components are independent. As a result, the theorem holds due to Lemma \ref{McDiarmid's Lemma}.

\end{proof}

\subsection{Proof of Theorem~\ref{Concentration Inequality for Dependency Forest 1}}\label{generalgraph}

\begin{lm}
Suppose that $f: \mathbf{\Omega}\rightarrow \R$ is a $\c$-Lipschitz function and $G$ is a dependency graph of a random vector $\X$ that takes values in $\mathbf{\Omega}$.  For any $t>0$ and any $(\phi,F)\in \Phi(G)$ with $F$ consisting of trees $\{ T_i \}_{i \in [k]}$, 
the following inequality holds:
\[
\P( f(\X) - \E[f(\X)] \ge t )
\le \exp \left( - \dfrac{2t^2}{ 
\sum_{\langle u,v \rangle \in E(F)} ( \widetilde{c}_u + \widetilde{c}_v)^2 + \sum_{i=1}^k \widetilde{c}_{\min,i}^2 } \right)
\]
where $\widetilde{c}_u=\sum_{i\in \phi^{-1}(u)} c_i$ and $\widetilde{c}_{\min,i}=\min_{u\in V(T_i)}\widetilde{c}_u$. Here, $\phi^{-1}(u)$ is the set of pre-images of $u$.
\label{Concentration Inequality for Dependency Graph}
\end{lm}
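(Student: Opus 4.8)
The plan is to reduce the statement to Theorem~\ref{Concentration Inequality for Dependency forest} by bundling together the variables that $\phi$ maps to the same vertex of $F$. Concretely, fix a forest approximation $(\phi,F)\in\Phi(G)$ with $V(F)=\{u_1,\ldots,u_m\}$, and for each $\ell\in[m]$ set $B_\ell=\phi^{-1}(u_\ell)$. These sets partition $[n]$. Define a new random vector $\widetilde{\X}=(\widetilde X_1,\ldots,\widetilde X_m)$ by $\widetilde X_\ell = \X_{B_\ell}$, taking values in $\widetilde\Omega_\ell=\prod_{i\in B_\ell}\Omega_i$. Since $f$ is a function on $\mathbf\Omega=\prod_{i\in[n]}\Omega_i$ and the $B_\ell$ partition the coordinates, there is a natural bijection between $\mathbf\Omega$ and $\prod_{\ell\in[m]}\widetilde\Omega_\ell$, and hence $f$ can be viewed as a function $\widetilde f$ of $\widetilde\X$ with $\widetilde f(\widetilde\x)=f(\x)$; in particular $\widetilde f(\widetilde\X)=f(\X)$ almost surely, so the deviation probabilities coincide.

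The three things to check are: (i) $F$ is a dependency graph of $\widetilde\X$; (ii) $\widetilde f$ is $\widetilde\c$-Lipschitz with $\widetilde c_\ell=\sum_{i\in B_\ell}c_i$; and (iii) the denominator produced by Theorem~\ref{Concentration Inequality for Dependency forest} applied to $\widetilde f$ on $F$ is exactly the one in the statement. For (i), take $I,J\subseteq[m]$ non-adjacent in $F$; then for every $i\in\bigcup_{\ell\in I}B_\ell$ and $j\in\bigcup_{\ell\in J}B_\ell$ the vertices $i,j$ cannot be adjacent in $G$ — otherwise the forest-approximation property would force $\phi(i)=\phi(j)$ (impossible, since $i,j$ lie in blocks of disjoint index sets $I,J$) or $\langle\phi(i),\phi(j)\rangle\in E(F)$ (contradicting non-adjacency of $I$ and $J$). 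Hence $\X_{\bigcup_{\ell\in I}B_\ell}$ and $\X_{\bigcup_{\ell\in J}B_\ell}$ are independent because $G$ is a dependency graph of $\X$, which is precisely independence of $\widetilde\X_I$ and $\widetilde\X_J$. For (ii), given $\widetilde\x,\widetilde\x'$ corresponding to $\x,\x'\in\mathbf\Omega$, note $\mathbf 1_{\{\widetilde x_\ell\ne\widetilde x'_\ell\}}\le\sum_{i\in B_\ell}\mathbf 1_{\{x_i\ne x'_i\}}$ is false in the wrong direction, so instead argue directly: $|\widetilde f(\widetilde\x)-\widetilde f(\widetilde\x')|=|f(\x)-f(\x')|\le\sum_{i=1}^n c_i\mathbf 1_{\{x_i\ne x'_i\}}=\sum_{\ell=1}^m\sum_{i\in B_\ell}c_i\mathbf 1_{\{x_i\ne x'_i\}}\le\sum_{\ell=1}^m\bigl(\sum_{i\in B_\ell}c_i\bigr)\mathbf 1_{\{\widetilde x_\ell\ne\widetilde x'_\ell\}}$, where the last inequality holds because $\widetilde x_\ell=\widetilde x'_\ell$ (i.e.\ $\x_{B_\ell}=\x'_{B_\ell}$) kills every term in the inner sum. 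This gives $\widetilde c_\ell=\sum_{i\in B_\ell}c_i=\widetilde c_{u_\ell}$ as desired. For (iii), plugging $\widetilde\c$ into Eq.~\eqref{eqn:Dependency forest} for the forest $F$ with components $\{T_i\}_{i\in[k]}$ yields denominator $\sum_{\langle u,v\rangle\in E(F)}(\widetilde c_u+\widetilde c_v)^2+\sum_{i=1}^k\widetilde c_{\min,i}^2$, matching the claim verbatim.

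The only genuinely delicate point is (ii): one must be careful that Lipschitzness of $\widetilde f$ is in terms of the Hamming distance of the \emph{bundled} coordinates, and the correct bound on a single bundled coordinate is the \emph{sum} $\sum_{i\in B_\ell}c_i$ (not the max), since flipping $\widetilde X_\ell$ potentially flips every $X_i$ with $i\in B_\ell$ at once. This is exactly why the forest complexity in Definition~\ref{lambda} uses $|\phi^{-1}(u)|$ additively, and why Theorem~\ref{Concentration Inequality for Dependency Forest 1} then follows by taking $c_i\le\|\c\|_\infty$ for all $i$, replacing each $\widetilde c_u$ by $|\phi^{-1}(u)|\,\|\c\|_\infty$, and minimizing over $(\phi,F)\in\Phi(G)$ to obtain $\Lambda(G)\|\c\|_\infty^2$ in the denominator. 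Everything else is a routine transfer through the coordinate bijection.
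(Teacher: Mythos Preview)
Your proposal is correct and follows essentially the same approach as the paper: bundle the variables $\{X_i\}_{i\in\phi^{-1}(u)}$ into a single block variable for each $u\in V(F)$, observe that the resulting function is $\widetilde{\c}$-Lipschitz with $\widetilde c_u=\sum_{i\in\phi^{-1}(u)}c_i$, and apply Theorem~\ref{Concentration Inequality for Dependency forest}. The paper's proof is a three-sentence sketch that leaves the verification of the Lipschitz bound and the dependency-graph property of $F$ as ``easy to check''; you have filled in exactly those details, including the key inequality $\sum_{i\in B_\ell}c_i\mathbf 1_{\{x_i\ne x'_i\}}\le\widetilde c_{u_\ell}\mathbf 1_{\{\widetilde x_\ell\ne\widetilde x'_\ell\}}$.
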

\begin{proof}
For any $u\in V(F)$, define a random vector $\mathbf{Y}_u=\{X_i\}_{i\in \phi^{-1}(u)}$. Treat each $\mathbf{Y}_u$ as a random variable. Define a new random vector $\mathbf{Y}=(\mathbf{Y}_u)_{u\in V(F)}$, and let $g(\mathbf{Y})=f(\X)$. It is easy to check that $g$ is $\widetilde{\c}$-Lipschitz, where $\widetilde{\c}=(\widetilde{c}_u)_{u\in V(F)}$. The theorem immediately follows from Theorem~\ref{Concentration Inequality for Dependency forest}.
\end{proof}

Lemma~\ref{Concentration Inequality for Dependency Graph} 
immediately implies Theorem~\ref{Concentration Inequality for Dependency Forest 1}
by the definition of forest complexity.

\section{Omitted Proofs in Section 4}
\label{Omitted Proofs in Section 4}

\subsection{Proof of Lemma~\ref{stability concentration}}\label{Phiisconcentrated}

The following technical lemma is needed.
\begin{lm}[\cite{bousquet2002stability}]
Given a $\beta_n$-uniformly stable learning algorithm $\A$,
for any $\S, \S' \in (\mathcal X \times \mathcal Y)^n$ that differ only in one entry, 
it holds that
\[
| \Phi_{\A}(\S) - \Phi_{\A}(\S')| \le 4\beta_n + \frac{M}{n}
\]
\label{generalization error bound bounded difference}
\end{lm}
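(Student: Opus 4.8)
The plan is to split the quantity $\Phi_{\A}(\S)-\Phi_{\A}(\S')$ into its generalization-error part $R(f^{\A}_{\S})-R(f^{\A}_{\S'})$ and its empirical-error part $\widehat R(f^{\A}_{\S})-\widehat R(f^{\A}_{\S'})$, bound each of these by the triangle inequality, and then add the two bounds. Suppose $\S$ and $\S'$ differ only in the $k$-th entry. The key device is the common leave-one-out sample $\S^{\setminus k}$ of size $n-1$ obtained by deleting that entry from $\S$ (equivalently from $\S'$, since they agree elsewhere); this sample serves as a bridge through which the $\beta_n$-uniform stability of $\A$ can be invoked twice. Concretely, for every $(x,y)\in\mathcal X\times\mathcal Y$ we have $|\ell(y,f^{\A}_{\S}(x))-\ell(y,f^{\A}_{\S^{\setminus k}}(x))|\le\beta_n$ and $|\ell(y,f^{\A}_{\S'}(x))-\ell(y,f^{\A}_{\S^{\setminus k}}(x))|\le\beta_n$, hence $|\ell(y,f^{\A}_{\S}(x))-\ell(y,f^{\A}_{\S'}(x))|\le 2\beta_n$.

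For the generalization-error part I would simply take expectations over $(x,y)\sim D$ in the displayed pointwise bound: $|R(f^{\A}_{\S})-R(f^{\A}_{\S'})|\le\E_{(x,y)\sim D}|\ell(y,f^{\A}_{\S}(x))-\ell(y,f^{\A}_{\S'}(x))|\le 2\beta_n$. For the empirical-error part, write $\widehat R(f^{\A}_{\S})-\widehat R(f^{\A}_{\S'})=\frac1n\sum_{i\ne k}\bigl(\ell(y_i,f^{\A}_{\S}(x_i))-\ell(y_i,f^{\A}_{\S'}(x_i))\bigr)+\frac1n\bigl(\ell(y_k,f^{\A}_{\S}(x_k))-\ell(y'_k,f^{\A}_{\S'}(x'_k))\bigr)$, where $(x'_k,y'_k)$ is the swapped example. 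Each of the $n-1$ terms in the sum is at most $2\beta_n$ by the same bridging argument (the data point is shared by $\S$ and $\S'$), so that block contributes at most $\frac{n-1}{n}\cdot 2\beta_n\le 2\beta_n$; the single remaining term involves two genuinely different examples, so I bound it crudely by $\frac1n\cdot M$ using $0\le\ell\le M$. Adding, $|\widehat R(f^{\A}_{\S})-\widehat R(f^{\A}_{\S'})|\le 2\beta_n+\frac{M}{n}$, and combining with the generalization-error bound via the triangle inequality gives $|\Phi_{\A}(\S)-\Phi_{\A}(\S')|\le 4\beta_n+\frac{M}{n}$.

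The argument is essentially bookkeeping, so there is no deep obstacle; the only place requiring care is the $i=k$ term of the empirical risk, where the two samples differ and stability is useless, forcing the unavoidable $M/n$ contribution — one must be sure this term is isolated correctly and that the remaining $n-1$ empirical terms together with the expectation term are each handled by the leave-one-out bridge rather than by comparing $\S$ and $\S'$ directly (which would only be legitimate after the bridge is in place). This is the standard bounded-differences estimate for $\Phi_{\A}$ from \cite{bousquet2002stability}, included here for completeness.
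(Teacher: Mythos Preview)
Your proposal is correct and follows essentially the same approach as the paper: both split $\Phi_{\A}$ into the generalization-error and empirical-error parts, bridge through the common leave-one-out sample $\S^{\setminus k}$ to invoke $\beta_n$-stability twice, isolate the single $k$-th empirical term to absorb the unavoidable $M/n$, and combine by the triangle inequality. The only cosmetic difference is that the paper writes the two bridge applications as two separate sums over $j\ne k$ (each contributing $(n-1)\beta_n$, then relaxed to $n\beta_n$), whereas you first deduce the pointwise bound $|\ell(y,f^{\A}_{\S}(x))-\ell(y,f^{\A}_{\S'}(x))|\le 2\beta_n$ and then apply it term-by-term; the arithmetic and logic are identical.
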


\begin{proof}
In the literature, Lemma~\ref{generalization error bound bounded difference}
was proved for i.i.d. data, actually,
the proof remains valid in our setting.
Assume $\S$, $\S'$ differ only in $i$-th entry,
and denote $\S'$ as $\S^i$
\[
\S^i 
= ( (x_1, y_1), \ldots, (x_{i - 1}, y_{i - 1}), 
(x_i', y_i'), (x_{i + 1}, y_{i + 1}) \ldots, (x_m, y_m))
\]
and the marginal distribution of $(x_i', y_i')$ is also $D$.

Notice that we do not require the data to be i.i.d., 
samples are dependent with the same marginal probability distribution $D$.
First, we bound $R (f^{\A}_{\S}) - R (f^{\A}_{\S^i})$
\begin{align}
& | R (f^{\A}_{\S}) - R (f^{\A}_{\S^i}) | \\
\le& | R (f^{\A}_{\S}) - R ( f^{\A}_{\S^{\setminus i}} ) | 
+ | R(f^{\A}_{\S^{\setminus i}}) - R (f^{\A}_{\S^i}) | 
\\
=& | \E_D [\ell(y, f^{\A}_{\S}(x))] - \E_D [\ell(y, f^{\A}_{\S^{\setminus i}}(x))] | 
+ | \E_D [\ell(y, f^{\A}_{\S^{\setminus i}}(x))] - \E_D [ \ell(y, f^{\A}_{\S^i}(x))] |
\\
=& | \E_D [\ell(y, f^{\A}_{\S}(x)) - \ell(y, f^{\A}_{\S^{\setminus i}}(x))] | 
+ | \E_D [\ell(y, f^{\A}_{\S^{\setminus i}}(x)) - \ell(y, f^{\A}_{\S^i}(x))] | 
\\
\le& 2\beta_n
\end{align}
then, we bound $\widehat R (f^{\A}_{\S}) - \widehat R_{\S^i} (f^{\A}_{\S^i}) $
\begin{align}
& n | \widehat R (f^{\A}_{\S}) - \widehat R_{\S^i} (f^{\A}_{\S^i}) | \\
=& \left| \sum_{ (x_j, y_j) \in \S } \ell(y_j, f^{\A}_{\S}(x_j))
- \sum_{ (x_j, y_j) \in \S^i } \ell(y_j, f^{\A}_{\S^i}(x_j)) \right| 
\\
\le& \sum_{j\ne i} |\ell(y_j, f^{\A}_{\S}(x_j)) - \ell(y_j, f^{\A}_{\S^i}(x_j)) | 
+ | \ell(y_i, f^{\A}_{\S}(x_i)) - \ell(y_i^\prime, f^{\A}_{\S^i}(x_i^\prime)) | 
\\
\le& \sum_{j\ne i} |\ell(y_j, f^{\A}_{\S}(x_j)) - \ell(y_j, f^{\A}_{\S^{\setminus i}}(x_j)) | 
+ \sum_{j\ne i} | \ell(y_j, f^{\A}_{\S^{\setminus i}}(x_j)) - \ell(y_j, f^{\A}_{\S^i}(x_j)) | 
\nonumber \\ 
&+ | \ell(y_i, f^{\A}_{\S}(x_i)) - \ell(y_i^\prime, f^{\A}_{\S^i}(x_i^\prime)) | 
\\
\le& 2n\beta_n + M
\end{align}
combining above bounds, we have
\[
\begin{array}{rl}
| \Phi_{\A}(\S) - \Phi_{\A}(\S^i)|
=& | ( R (f^{\A}_{\S}) - \widehat R (f^{\A}_{\S}) ) - ( R (f^{\A}_{\S^i}) - \widehat R_{\S^i} (f^{\A}_{\S^i}) ) | \\
\le& | R (f^{\A}_{\S}) - R (f^{\A}_{\S^i}) | + | \widehat R (f^{\A}_{\S}) - \widehat R_{\S^i} (f^{\A}_{\S^i}) | \\
\le& 4\beta_n + \frac{M}{n}
\end{array}
\]
\end{proof}
combining Lemma~\ref{generalization error bound bounded difference}
and Theorem~\ref{Concentration Inequality for Dependency Forest 1}
leads to Lemma~\ref{stability concentration}.

\subsection{Proof of Lemma~\ref{stability expectation lemma via degree}} \label{Phihassmallexpectation}

We introduce a technical lemma before the proof of Lemma~\ref{stability expectation lemma via degree}.
\begin{lm}
Given a sample $\S$ of size $n$ with dependency graph $G$,
assume that the learning algorithm $\A$ is $\beta_i$-uniformly stable for any $i\le n$.
Suppose the maximum degree of $G$ is $\Delta$. Let $\beta_{n, \Delta} = \max_{i \in [0,\Delta]} \beta_{n-i}$. It holds that
\[
\max_{(x_i, y_i) \in \S} \E_{\S, (x, y)} [\ell(y, f^{\A}_{\S}(x)) - \ell(y_i, f^{\A}_{\S}(x_i))]
\le 2 \beta_{n, \Delta}(\Delta+1).
\]
\end{lm}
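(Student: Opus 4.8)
The plan is to bound the expected discrepancy between the loss of $f^{\A}_{\S}$ on a fresh test point $(x,y)\sim D$ and its loss on the training point $(x_i,y_i)$, by constructing a chain of intermediate samples that gradually decouples $(x_i,y_i)$ from the rest of the training set. Arbitrarily fix $(x_i,y_i)\in\S$. The key observation is that the only reason $\E_{\S,(x,y)}[\ell(y,f^{\A}_{\S}(x))]$ and $\E_{\S,(x,y)}[\ell(y_i,f^{\A}_{\S}(x_i))]$ differ is the dependency between $(x_i,y_i)$ and the neighbours $N_G(i)$; once we can argue that $(x_i,y_i)$ has the same marginal $D$ as $(x,y)$ and is independent of the hypothesis, symmetry kills the gap. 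So the strategy is: (i) remove $(x_i,y_i)$ and all its neighbours from $\S$ to obtain $\S'$; then $\S'$ is independent of $(x_i,y_i)$ (indeed, in a dependency graph, deleting the closed neighbourhood of a vertex leaves variables independent of $X_i$); (ii) use uniform stability to control the cost of these deletions and the cost of adding points back.

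First I would write, using $\beta$-uniform stability repeatedly, that for any sample $\S$ of size $n$ and any point $(x_j,y_j)$ in it, removing that point changes $\ell(y,f^{\A}_{\cdot}(x))$ by at most $\beta_n$ pointwise; iterating, removing $\Delta+1$ points (the vertex $i$ plus up to $\Delta$ neighbours) costs at most $\sum$ of $\Delta+1$ terms, each bounded by $\beta_{n,\Delta}$ since all intermediate sample sizes lie in $[n-\Delta,n]$. Thus
\[
\bigl|\E_{\S,(x,y)}[\ell(y,f^{\A}_{\S}(x))] - \E_{\S,(x,y)}[\ell(y,f^{\A}_{\S'}(x))]\bigr| \le (\Delta+1)\beta_{n,\Delta},
\]
where $\S'=\S^{\setminus (\{i\}\cup N_G(i))}$, and similarly with $\ell(y,\cdot)$ replaced by $\ell(y_i,\cdot)$. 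Then, because $\S'$ is independent of $(x_i,y_i)$ and $(x_i,y_i)$ has marginal $D$ just like the fresh $(x,y)$, the two expectations $\E[\ell(y,f^{\A}_{\S'}(x))]$ and $\E[\ell(y_i,f^{\A}_{\S'}(x_i))]$ are \emph{equal}: conditioning on $\S'$, each is $\E_{(x,y)\sim D}[\ell(y,f^{\A}_{\S'}(x))]$. Chaining the three bounds — $\S\to\S'$ on the test term ($(\Delta+1)\beta_{n,\Delta}$), equality at $\S'$, and $\S'\to\S$ on the training term ($(\Delta+1)\beta_{n,\Delta}$) — gives the claimed $2\beta_{n,\Delta}(\Delta+1)$. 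Taking the maximum over $(x_i,y_i)\in\S$ finishes the lemma.

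The main obstacle I anticipate is the bookkeeping in step (i): one must be careful that the sample sizes encountered along the removal chain never drop below $n-\Delta$, so that every single-point perturbation is covered by one of the $\beta_{n-j}$ with $0\le j\le\Delta$, hence by $\beta_{n,\Delta}=\max_{j\in[0,\Delta]}\beta_{n-j}$; and that the independence claim ``$\S^{\setminus(\{i\}\cup N_G(i))}$ is independent of $(x_i,y_i)$'' is correctly justified from the dependency-graph definition (the index set $[n]\setminus(\{i\}\cup N_G(i))$ is non-adjacent to $\{i\}$ in $G$). A minor subtlety is that $|N_G(i)|$ may be strictly less than $\Delta$, but this only makes the chain shorter, so the bound still holds; using $\Delta+1$ uniformly is the safe choice. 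Everything else is a direct application of uniform stability and linearity of expectation, with no concentration needed at this stage.
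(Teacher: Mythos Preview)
Your proposal is correct and is essentially the paper's own argument: the paper also removes the closed neighbourhood $N_G^+(i)=\{i\}\cup N_G(i)$ one vertex at a time via a telescoping sum, bounds each of the at most $\Delta+1$ perturbations by $\beta_{n,\Delta}$ (using that the pre-removal sample sizes stay in $[n-\Delta,n]$), and then observes that after the removals $(x_i,y_i)$ and the fresh $(x,y)$ are both independent of the reduced sample and share the marginal $D$, so the remaining difference has zero expectation. The only cosmetic difference is that the paper telescopes pointwise before taking expectations, whereas you work directly with expectations; the bookkeeping and the final constant $2(\Delta+1)\beta_{n,\Delta}$ are identical.
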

\begin{proof}
For any $i\in [n]$, suppose $N_G^+(i)=\{j_1,\ldots,j_{n_i}\}$ with $j_{k-1}>j_k$. 
Define $\S^{(i,0)}=\S$ and for $k\in [n_i]$, $\S^{(i,k)}$ is obtained from $\S^{(i,k-1)}$ by removing the $j_k$-th entry. 
By uniform stability of $\A$, for any $(x,y)\in \mathcal X \times\mathcal Y$ and $k\in [n_i]$, 
\[
|\ell(y, f^{\A}_{\S^{(i,k-1)}}(x))-\ell(y, f^{\A}_{\S^{(i,k)}}(x))|\le \beta_{n, \Delta}
\]
we have the decomposition via telescoping
\[
\ell(y, f^{\A}_{\S}(x)) 
= \sum_{k=1}^{n_i} (\ell(y, f^{\A}_{\S^{(i,k-1)}}(x))-\ell(y, f^{\A}_{\S^{(i,k)}}(x)) 
+ \ell(y, f^{\A}_{\S^{(i,n_i)}}(x))
\]
similarly
\[
\ell(y_i, f^{\A}_{\S}(x_i)) 
= \sum_{k=1}^{n_i} (\ell(y_i, f^{\A}_{\S^{(i,k-1)}}(x_i))-\ell(y_i, f^{\A}_{\S^{(i,k)}}(x_i)) 
+ \ell(y_i, f^{\A}_{\S^{(i,n_i)}}(x_i))
\]
Thus, we have
\[
\begin{array}{rl}
& \ell(y, f^{\A}_{\S}(x)) - \ell(y_i, f^{\A}_{\S}(x_i))  \\
=& \sum_{k=1}^{n_i} \left( (\ell(y, f^{\A}_{\S^{(i,k-1)}}(x))-\ell(y, f^{\A}_{\S^{(i,k)}}(x)))
- (\ell(y_i, f^{\A}_{\S^{(i,k)}}(x_i))-\ell(y_i, f^{\A}_{\S^{(i,k-1)}}(x_i))) \right)  \\
&+ \ell(y, f^{\A}_{\S^{(i,n_i)}}(x))-\ell(y_i, f^{\A}_{\S^{(i,n_i)}}(x_i)) \\
\le& \sum_{k=1}^{n_i} | \ell(y, f^{\A}_{\S^{(i,k-1)}}(x))-\ell(y, f^{\A}_{\S^{(i,k)}}(x)) | \\
&+ \sum_{k=1}^{n_i} | \ell(y_i, f^{\A}_{\S^{(i,k)}}(x_i))-\ell(y_i, f^{\A}_{\S^{(i,k-1)}}(x_i)) |
+ \ell(y, f^{\A}_{\S^{(i,n_i)}}(x))-\ell(y_i, f^{\A}_{\S^{(i,n_i)}}(x_i))\\
\le& 2 n_i\beta_{n, \Delta} + \ell(y, f^{\A}_{\S^{(i,n_i)}}(x))-\ell(y_i, f^{\A}_{\S^{(i,n_i)}}(x_i))
\end{array}
\] 
As a result,
\[
\begin{array}{rl}
&\E_{\S, (x, y)} [\ell(y, f^{\A}_{\S}(x)) - \ell(y_i, f^{\A}_{\S}(x_i))] \\
=& \E_{\S, (x, y)} [ \ell(y, f^{\A}_{\S^{(i,n_i)}}(x)) - \ell(y_i, f^{\A}_{\S^{(i,n_i)}}(x_i)) ] 
+ 2 n_i \beta_{n, \Delta} \\
\le& \E_{\S, (x, y)} [ \ell(y, f^{\A}_{\S^{(i,n_i)}}(x)) - \ell(y_i, f^{\A}_{\S^{(i,n_i)}}(x_i)) ] 
+ 2 \beta_{n, \Delta}(\Delta+1) \\
=& \E_{\S, (x, y)} [ \ell(y, f^{\A}_{\S^{(i,n_i)}}(x)) ] 
- \E_\S [\ell(y_i, f^{\A}_{\S^{(i,n_i)}}(x_i)) ]
+ 2 \beta_{n, \Delta}(\Delta+1) \\
=& \E_{\S^{(i,n_i)}, (x, y)} [ \ell(y, f^{\A}_{\S^{(i,n_i)}}(x)) ]
- \E_{\S^{(i,n_i)},(x_i,y_i)} [\ell(y_i, f^{\A}_{\S^{(i,n_i)}}(x_i)) ] + 2 \beta_{n, \Delta}(\Delta+1) \\
=& 2 \beta_{n, \Delta}(\Delta+1)
\end{array}
\]
The last equality is because
$(x_i, y_i)$ and $(x, y)$ are independent of $\S^{(i,n_i)}$ and have the same distribution.
\end{proof}

\begin{proof}[Proof of Lemma~\ref{stability expectation lemma via degree}]
\[
\begin{array}{rl}
\E_{\S} [ \Phi_{\A}(\S) ]
=& \E_{\S} 
[ \E_{(x, y)} [\ell(y, f^{\A}_{\S}(x))] - \frac{1}{n} \sum_{i = 1}^n \ell(y_i, f^{\A}_{\S}(x_i)) ] \\
=& \frac{1}{n} \sum_{i = 1}^n \E_{\S, (x, y)}
[ \ell(y, f^{\A}_{\S}(x)) - \ell(y_i, f^{\A}_{\S}(x_i)) ] \\
\le& 2 \beta_{n, \Delta}(\Delta+1)
\end{array}
\]	
\end{proof}

\end{document}